\newcommand{\reals}{\mathbb{R}}
\newcommand{\diag}[1]{\mathsf{diag}(#1)}
\newcommand{\gKL}[2]{\widetilde{\text{KL}}\left(#1 \| #2\right)}
\newcommand{\bigbrace}[1]{\left\{\begin{array}{lr} #1 \end{array} \right.}
\newcommand{\lr}[1]{\left( #1 \right)}
\newtheorem{theorem}{Theorem}[section]
\newtheorem{proposition}[theorem]{Proposition}
\newtheorem{lemma}[theorem]{Lemma}
\newtheorem{corollary}[theorem]{Corollary}
\newtheorem{definition}[theorem]{Definition}
\DeclareMathOperator*{\argmin}{argmin}
\DeclareMathOperator*{\argmax}{argmax}
\newcommand{\Wass}{W}
\newcommand{\Jacc}{J}
\newcommand{\X}{\mathcal{X}}
\newcommand{\SKL}{W_{KL}}
\title{Learning with a Wasserstein Loss}
\author{
Charlie Frogner\thanks{Authors contributed equally.} \hspace{1em}
Chiyuan Zhang$^*$ \\
Center for Brains, Minds and Machines \\
Massachusetts Institute of Technology \\
\texttt{frogner@mit.edu, chiyuan@mit.edu} \\
\And
Hossein Mobahi \\
CSAIL\\
Massachusetts Institute of Technology \\
\texttt{hmobahi@csail.mit.edu} \\
\And
Mauricio Araya-Polo \\
Shell International E \& P, Inc. \\
\texttt{Mauricio.Araya@shell.com} \\
\And
Tomaso Poggio\\
Center for Brains, Minds and Machines \\
Massachusetts Institute of Technology\\
\texttt{tp@ai.mit.edu}
}
\newcommand{\RR}{\mathbb{R}}
\newcommand{\EE}{\mathbb{E}}
\newcommand{\PP}{\mathbb{P}}
\begin{document}

\maketitle

\begin{abstract}
Learning to predict multi-label outputs is challenging, but in many problems there is a natural metric on the outputs that can be used to improve predictions. In this paper we develop a loss function for multi-label learning, based on the Wasserstein distance. The Wasserstein distance provides a natural notion of dissimilarity for probability measures. Although optimizing with respect to the exact Wasserstein distance is costly, recent work has described a regularized approximation that is efficiently computed. We describe an efficient learning algorithm based on this regularization, as well as a novel extension of the Wasserstein distance from probability measures to unnormalized measures. We also describe a statistical learning bound for the loss. The Wasserstein loss can encourage smoothness of the predictions with respect to a chosen metric on the output space. We demonstrate this property on a real-data tag prediction problem, using the Yahoo Flickr Creative Commons dataset, outperforming a baseline that doesn't use the metric.
\end{abstract}

\ifdefined\hossein {{\color{red}To eliminate my comments,  comment out \verb$\def\hossein$ and recompile.}}\fi

\footnotetext[1]{Code and data are available at \url{http://cbcl.mit.edu/wasserstein}.}

\section{Introduction}
We consider the problem of learning to predict a non-negative measure over a finite set. This problem includes many common machine learning scenarios. In multiclass classification, for example, one often predicts a vector of scores or probabilities for the classes. And in semantic segmentation \cite{long-shelhamer-fcn}, one can model the segmentation as being the support of a measure defined over the pixel locations. Many problems in which the output of the learning machine is both non-negative and multi-dimensional might be cast as predicting a measure.

We specifically focus on problems in which the output space has a natural metric or similarity structure, which is known (or estimated) {\it a priori}. In practice, many learning problems have such structure. In the ImageNet Large Scale Visual Recognition Challenge [ILSVRC] \cite{ILSVRC15}, for example, the output dimensions correspond to 1000 object categories that have inherent semantic relationships, some of which are captured in the WordNet hierarchy that accompanies the categories. Similarly, in the keyword spotting task from the IARPA Babel speech recognition project, the outputs correspond to keywords that likewise have semantic relationships. In what follows, we will call the similarity structure on the label space the \emph{ground metric} or \emph{semantic similarity}.


Using the ground metric, we can measure prediction performance in a way that is sensitive to relationships between the different output dimensions. For example, confusing dogs with cats might be more severe an error than confusing breeds of dogs. A loss function that incorporates this metric might encourage the learning algorithm to favor predictions that are, if not completely accurate, at least semantically similar to the ground truth.
\begin{floatingfigure}[r]{0.33\textwidth}
\centering\vskip-.8em
    \includegraphics[width=0.15\textwidth]{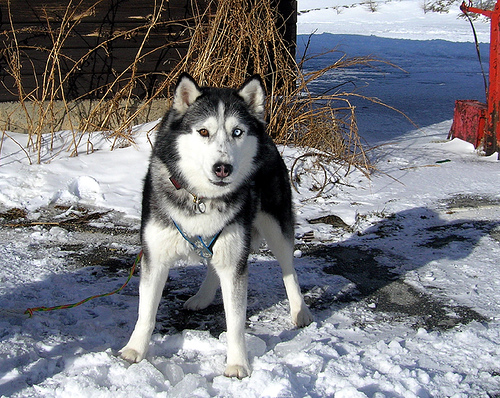}
    \hspace{3pt}
    \includegraphics[width=0.15\textwidth]{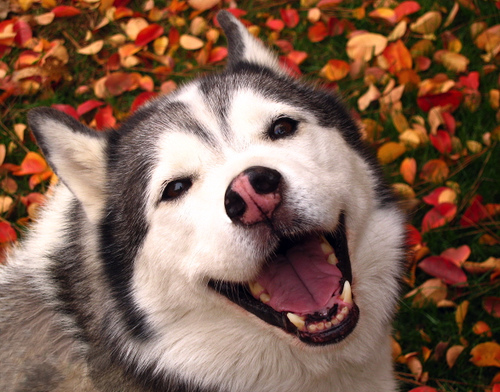}
    \scriptsize \hspace{1em} Siberian husky \hspace{3em} Eskimo dog
\caption{Semantically near-equivalent classes in ILSVRC}
\label{fig:dogs}
\end{floatingfigure}

In this paper, we develop a loss function for multi-label learning that measures the \emph{Wasserstein distance} between a prediction and the target label, with respect to a chosen metric on the output space. The Wasserstein distance is defined as the cost of the optimal transport plan for moving the mass in the predicted measure to match that in the target, and has been applied to a wide range of problems, including barycenter estimation \cite{Cuturi:2014vh}, label propagation \cite{Solomon:2014ts}, and clustering \cite{Coen:2010ut}. To our knowledge, this paper represents the first use of the Wasserstein distance as a loss for supervised learning.

We briefly describe a case in which the Wasserstein loss improves learning performance. The setting is a multiclass classification problem in which label noise arises from confusion of semantically near-equivalent categories. Figure \ref{fig:dogs} shows such a case from the ILSVRC, in which the categories \emph{Siberian husky} and \emph{Eskimo dog} are nearly indistinguishable. We synthesize a toy version of this problem by identifying categories with points in the Euclidean plane and randomly switching the training labels to nearby classes. The Wasserstein loss yields predictions that are closer to the ground truth, robustly across all noise levels, as shown in Figure~\ref{fig:lattice}. The standard multiclass logistic loss is the baseline for comparison. Section~\ref{sec:lattice-details} in the Appendix describes the experiment in more detail.

\begin{figure}
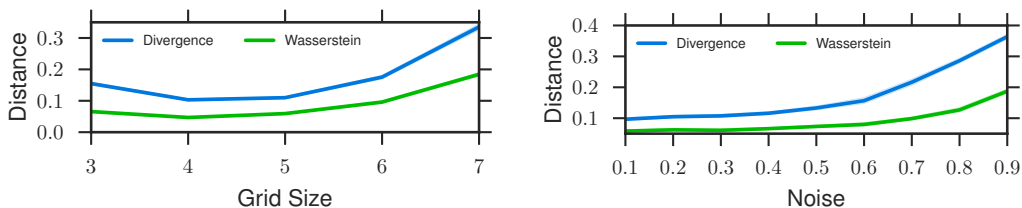

\centering
\begin{subfigure}[b]{0.49\textwidth}
\resizebox{\linewidth}{!}{
\input{grid-mean.pgf}
}\vskip-.4em
\end{subfigure}\hfill
\begin{subfigure}[b]{0.49\textwidth}
\resizebox{\linewidth}{!}{
\input{noise-mean.pgf}
}\vskip-.4em
\end{subfigure}
\caption{The Wasserstein loss encourages predictions that are similar to ground truth, robustly to incorrect labeling of similar classes (see Appendix \ref{sec:lattice-details}). Shown is Euclidean distance between prediction and ground truth vs. (left) number of classes, averaged over different noise levels and (right) noise level, averaged over number of classes. Baseline is the multiclass logistic loss.}
\vskip-0.5cm
\label{fig:lattice}
\end{figure}

The main contributions of this paper are as follows. We formulate the problem of learning with prior knowledge of the ground metric, and propose the Wasserstein loss as an alternative to traditional information divergence-based loss functions. Specifically, we focus on empirical risk minimization (ERM) with the Wasserstein loss, and describe an efficient learning algorithm based on entropic regularization of the optimal transport problem. We also describe a novel extension to unnormalized measures that is similarly efficient to compute. We then justify ERM with the Wasserstein loss by showing a statistical learning bound. Finally, we evaluate the proposed loss on both synthetic examples and a real-world image annotation problem, demonstrating benefits for incorporating an output metric into the loss.

\section{Related work}

Decomposable loss functions like KL Divergence and $\ell_p$ distances are very popular for probabilistic \cite{long-shelhamer-fcn} or vector-valued \cite{MAL-036} predictions, as each component can be evaluated independently, often leading to simple and efficient algorithms.
The idea of exploiting smoothness in the label space
according to a prior metric has been explored in many different forms, including regularization \cite
{rudin1992nonlinear} and post-processing with graphical models \cite{chen14semantic}. Optimal transport provides a natural distance for probability distributions over metric spaces. In \cite
{Cuturi:2014vh, Cuturi:2015ue}, the optimal transport is used to formulate the Wasserstein barycenter as a probability distribution with minimum total Wasserstein distance to a set of given points on the probability simplex. \cite{Solomon:2014ts} propagates histogram values on a graph by minimizing a  Dirichlet energy induced by optimal transport. The
Wasserstein distance is also used to formulate a metric for comparing clusters in \cite{Coen:2010ut}, and is applied to image retrieval \cite{rubner2000earth}, contour matching \cite{grauman2004fast}, and many other problems \cite{Shirdhonkar2008-sc, persistent}. However, to our knowledge, this is the first time it is used as a loss function in a discriminative learning framework. The closest work to this paper is a theoretical study \cite{Bassetti2006-nu} of an estimator that minimizes the optimal transport cost between the empirical distribution and the estimated distribution in the setting of statistical parameter estimation.

\section{Learning with a Wasserstein loss}

\subsection{Problem setup and notation}

We consider the problem of learning a map from $\mathcal{X}\subset\RR^{D}$ into the space $\mathcal{Y} = \reals_+^K$ of measures over a finite set $\mathcal{K}$ of size $|\mathcal{K}|=K$. Assume $\mathcal{K}$ possesses a metric
$d_\mathcal{K}(\cdot,\cdot)$, which is called the \emph{ground metric}.  $d_\mathcal{K}$ measures semantic similarity between dimensions of the output, which correspond to the elements of $\mathcal{K}$.
We perform learning over a hypothesis space $\mathcal{H}$ of predictors
$h_\theta:\mathcal{X}\rightarrow \mathcal{Y}$,
parameterized by $\theta\in\Theta$. These might be linear logistic regression models, for example.

In the standard statistical learning setting, we get an i.i.d. sequence of training examples
$S=((x_1,y_1),\ldots,(x_N,y_N))$, sampled from an unknown joint distribution
$\mathcal{P}_{\mathcal{X}\times\mathcal{Y}}$. Given a
measure of performance (a.k.a. \emph{risk}) $\mathcal{E}(\cdot,\cdot)$, the goal is to find the predictor
$h_\theta\in\mathcal{H}$ that minimizes the expected risk
$\EE[\mathcal{E}(h_\theta(x), y)]$. Typically $\mathcal{E}(\cdot,\cdot)$ is
difficult to optimize directly and the joint distribution $\mathcal{P}_{\mathcal{X}\times\mathcal{Y}}$ is
unknown, so learning is performed via \emph{empirical risk minimization}. Specifically, we \textbf{}solve
\begin{equation}
  \min_{h_\theta\in\mathcal{H}} \left\{ \hat{\EE}_S[\ell(h_\theta(x),y)
    = \frac{1}{N}\sum_{i=1}^N \ell(h_\theta(x_i),y_i) \right\}
    \label{eq:erm-obj}
\end{equation}
with a loss function $\ell(\cdot,\cdot)$ acting as a surrogate of $\mathcal{E}(\cdot,\cdot)$.

\subsection{Optimal transport and the exact Wasserstein loss}

Information divergence-based loss functions are widely used in learning with probability-valued outputs. Along
with other popular measures like Hellinger distance and $\chi^2$ distance, these divergences treat the output dimensions independently, ignoring any metric structure on $\mathcal{K}$.

Given a cost function $c:\mathcal{K}\times\mathcal
{K}\rightarrow\RR$, the \emph {optimal transport} distance \cite{villani2008optimal} measures the cheapest way to transport the mass in probability measure $\mu_1$ to match that in $\mu_2$:
\begin{equation}
\label{eq:optimal-transport}
  W_c(\mu_1,\mu_2) = \inf_{\gamma \in\Pi(\mu_1,\mu_2)}\int_{\mathcal{K}\times\mathcal{K}} c
  (\kappa_1,\kappa_2) \gamma(d\kappa_1,d\kappa_2)
\end{equation}
where $\Pi(\mu_1,\mu_2)$ is the set of joint probability measures on $\mathcal{K}\times\mathcal{K}$
having $\mu_1$ and $\mu_2$ as marginals. An important case is
that in which the cost is given by a metric $d_\mathcal{K}(\cdot,\cdot)$ or its $p$-th power $d^p_\mathcal{K}
(\cdot,\cdot)$ with $p\geq 1$. In this case, \eqref{eq:optimal-transport} is called a \emph{Wasserstein distance} \cite
{Bogachev2012-gm}, also known as the \emph {earth mover's distance} \cite{rubner2000earth}. In this
paper, we only work with discrete measures. In the case of probability measures, these are histograms in the simplex $\Delta^\mathcal{K}$. When the ground truth $y$ and the output of $h$ both lie in the simplex $\Delta^{\mathcal{K}}$, we can define a Wasserstein loss.

\begin{definition}[Exact Wasserstein Loss]
For any $h_\theta\in\mathcal{H}$, $h_{\theta} : \mathcal{X} \rightarrow \Delta^{\mathcal{K}}$, let $h_\theta(\kappa|x)=h_
{\theta} (x)_\kappa$ be the predicted value at element $\kappa\in\mathcal{K}$, given input $x\in\mathcal{X}$. Let $y(\kappa)$ be the ground truth value for $\kappa$ given by the corresponding label $y$.
Then we define the \emph{exact Wasserstein loss} as
\begin{equation}
  W_p^p(h(\cdot|x), y(\cdot)) = \inf_{T\in\Pi(h(x),y)} \langle T,
  M\rangle
  \label{eq:wasserstein-loss}
\end{equation}
where $M\in\RR^{K\times K}_+$ is the distance matrix $M_{\kappa,\kappa'}=d_\mathcal {K}^p
(\kappa,\kappa')$, and the set of valid transport plans is
\begin{equation}
  \Pi(h(x),y) = \{T\in\RR^{K\times K}_+: T\mathbf{1}=h(x),\;
  T^\top\mathbf{1} = y\}
  \label{eq:transport-polytope}
\end{equation}
where $\mathbf{1}$ is the all-one vector.
\end{definition}

$W_p^p$ is the cost of the optimal plan for transporting the predicted mass distribution $h(x)$ to match the target distribution $y$. The penalty increases as more mass is transported over longer distances, according
to the ground metric $M$.

\section{Efficient optimization via entropic regularization}
\label{sec:Wass-computation}

\begin{algorithm}
\caption{Gradient of the Wasserstein loss}
\begin{algorithmic}
\State Given $h(x)$, $y$, $\lambda$, $\mathbf{K}$. ($\gamma_a$, $\gamma_b$ if $h(x)$, $y$ unnormalized.)
\State $u \gets \mathbf{1}$
\While{$u$ has not converged}
\State $u \gets \bigbrace{h(x) \oslash \lr{\mathbf{K} \lr{y \oslash \mathbf{K}^\top u}} & \text{if $h(x)$, $y$ normalized} \\
h(x)^{\frac{\gamma_a \lambda}{\gamma_a \lambda + 1}} \oslash \lr{\mathbf{K} \lr{y \oslash \mathbf{K}^\top u}^{\frac{\gamma_b \lambda}{\gamma_b \lambda + 1}}}^{\frac{\gamma_a \lambda}{\gamma_a \lambda + 1}}  & \text{if $h(x)$, $y$ unnormalized}}$
\EndWhile
\State If $h(x)$, $y$ unnormalized: $v \gets y^{\frac{\gamma_b \lambda}{\gamma_b \lambda + 1}} \oslash \lr{\mathbf{K}^\top u}^{\frac{\gamma_b \lambda}{\gamma_b \lambda + 1}}$
\State $\partial W_p^p/\partial h(x) \gets \bigbrace{\frac{\log u}{\lambda} - \frac{\log u^\top \mathbf{1}}{\lambda K} \mathbf{1} & \text{if $h(x)$, $y$ normalized} \\
\gamma_a \lr{\mathbf{1} - (\diag u \mathbf{K} v) \oslash h(x)} & \text{if $h(x)$, $y$ unnormalized}}$
\end{algorithmic}
\label{algo:subgradient}
\end{algorithm}

To do learning, we optimize the empirical risk minimization functional \eqref{eq:erm-obj} by gradient descent. Doing so requires evaluating a descent direction for the loss, with respect to the predictions $h(x)$. Unfortunately, computing a subgradient of the exact Wasserstein loss \eqref{eq:wasserstein-loss}, is quite costly, as follows.

The exact Wasserstein loss \eqref{eq:wasserstein-loss} is a linear program and a subgradient of its solution can be computed using Lagrange duality. 
The dual LP of \eqref{eq:wasserstein-loss} is
\begin{equation}
^dW_p^p(h(x), y) = \sup_{\alpha,\beta\in C_M} \alpha^\top h(x) + \beta^\top y, \quad
C_M = \{(\alpha,\beta)\in\RR^{K \times K}: \alpha_{\kappa}+\beta_{\kappa'} \leq M_{\kappa, \kappa'}\}.
\end{equation}
As \eqref{eq:wasserstein-loss} is a linear program, at an optimum the values of the dual and the primal are equal (see, e.g. \cite{introLP}), hence the dual optimal $\alpha$ is a subgradient of the loss with respect to its first argument.

Computing $\alpha$ is costly, as it entails solving a linear program with $O(K^2)$ contraints, with $K$ being the dimension of the output space. This cost can be prohibitive when optimizing by gradient descent.

\subsection{Entropic regularization of optimal transport}

Cuturi \cite{Cuturi:2013wo} proposes a smoothed transport objective that enables efficient approximation of both the transport matrix in \eqref{eq:wasserstein-loss} and the subgradient of the loss. \cite{Cuturi:2013wo} introduces an entropic regularization term that results in a strictly convex problem:
\begin{equation}
\label{eq:regularized-primal}
^\lambda W_p^p(h(\cdot|x),y(\cdot)) = \inf_{T\in\Pi(h(x),y)}\langle T, M\rangle - \frac{1}{\lambda} H(T), \quad
H(T) = -\sum_{\kappa,\kappa'}T_{\kappa,\kappa'}\log T_{\kappa,\kappa'}.
\end{equation}
Importantly, the transport matrix that solves \eqref{eq:regularized-primal} is a \emph{diagonal scaling} of a matrix $\mathbf{K}=e^{-\lambda M - 1}$:
\begin{equation}
\label{eq:diagonal-scaling}
T^* = \diag{u} \mathbf{K} \diag{v}
\end{equation}
for $u = e^{\lambda \alpha}$ and $v = e^{\lambda \beta}$, 
where $\alpha$ and $\beta$ are the Lagrange dual variables for \eqref{eq:regularized-primal}.

Identifying such a matrix subject to equality constraints on the row and column sums is exactly a \emph{matrix balancing} problem, which is well-studied in numerical linear algebra and for which efficient iterative algorithms exist \cite{Knight:2012eh}. \cite{Cuturi:2013wo} and \cite{Cuturi:2014vh} use the well-known Sinkhorn-Knopp algorithm.

\subsection{Extending smoothed transport to the learning setting}

When the output vectors $h(x)$ and $y$ lie in the simplex, \eqref{eq:regularized-primal} can be used directly in place of \eqref{eq:wasserstein-loss}, as \eqref{eq:regularized-primal} can approximate the exact Wasserstein distance closely for large enough $\lambda$ \cite{Cuturi:2013wo}. In this case, the gradient $\alpha$ of the objective can be obtained from the optimal scaling vector $u$ as $\alpha = \frac{\log u}{\lambda} - \frac{\log{u}^\top \mathbf{1}}{\lambda K} \mathbf{1}$.  \footnote{Note that $\alpha$ is only defined up to a constant shift: any upscaling of the vector $u$ can be paired with a corresponding downscaling of the vector $v$ (and vice versa) without altering the matrix $T^*$. The choice $\alpha = \frac{\log u}{\lambda} - \frac{\log{u}^\top \mathbf{1}}{\lambda K} \mathbf{1}$ ensures that $\alpha$ is tangent to the simplex.} A Sinkhorn iteration for the gradient is given in Algorithm \ref{algo:subgradient}.

For many learning problems, however, a normalized output assumption is unnatural. In image segmentation, for example, the target shape is not naturally represented as a histogram. And even when the prediction and the ground truth are constrained to the simplex, the observed label can be subject to noise that violates the constraint.

There is more than one way to generalize optimal transport to unnormalized measures, and this is a subject of active study \cite{Chizat:2015ud}. We will develop here a novel objective that deals effectively with the difference in total mass between $h(x)$ and $y$ while still being efficient to optimize.

\begin{figure}
\centering
\begin{subfigure}[b]{0.32\textwidth}
\resizebox{\linewidth}{!}{
\includegraphics[width=\linewidth,]{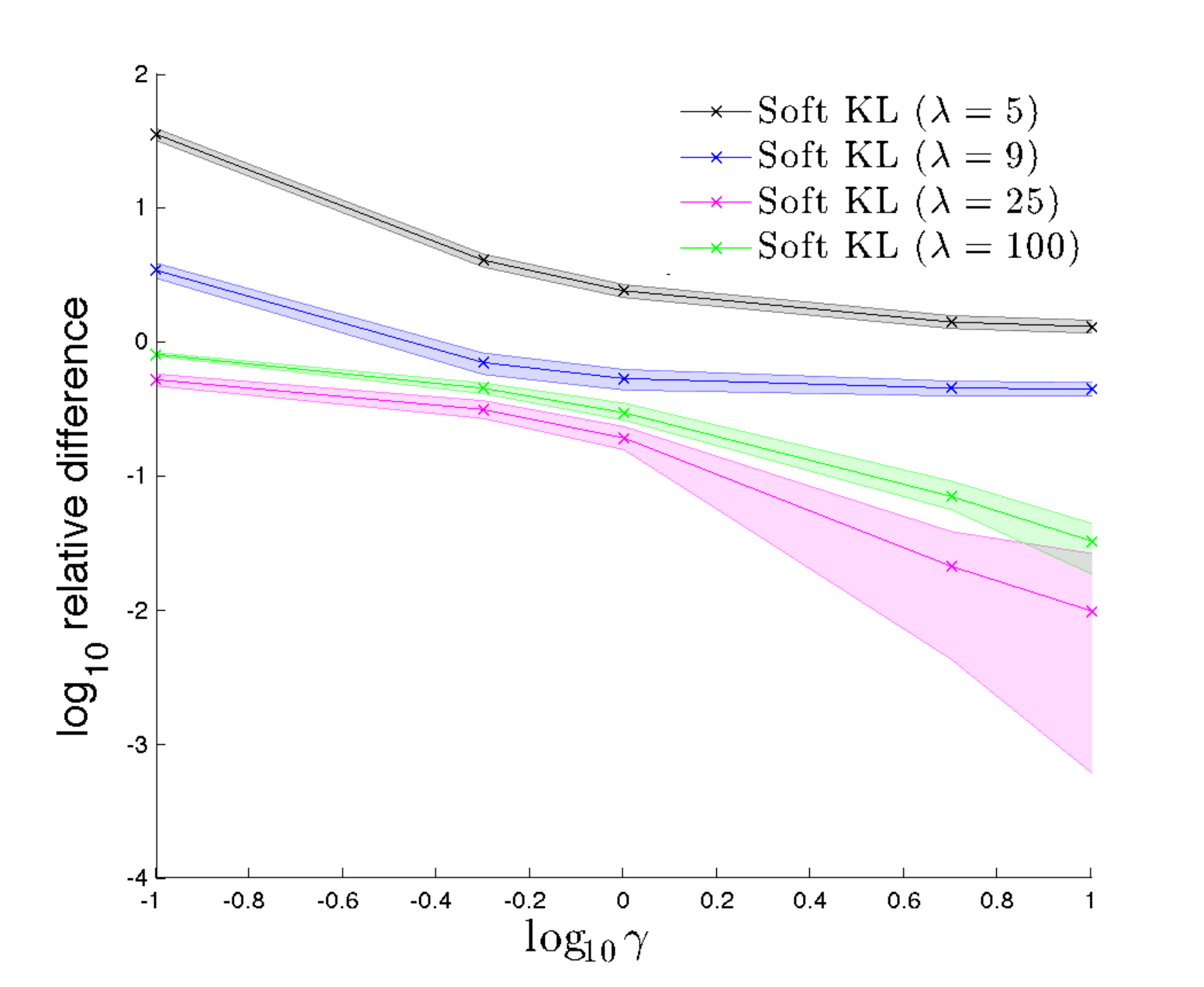}
}\vskip-.4em
\caption{Convergence to smoothed transport.}
\end{subfigure}\hfill
\begin{subfigure}[b]{0.32\textwidth}
\resizebox{\linewidth}{!}{
\includegraphics[width=\linewidth]{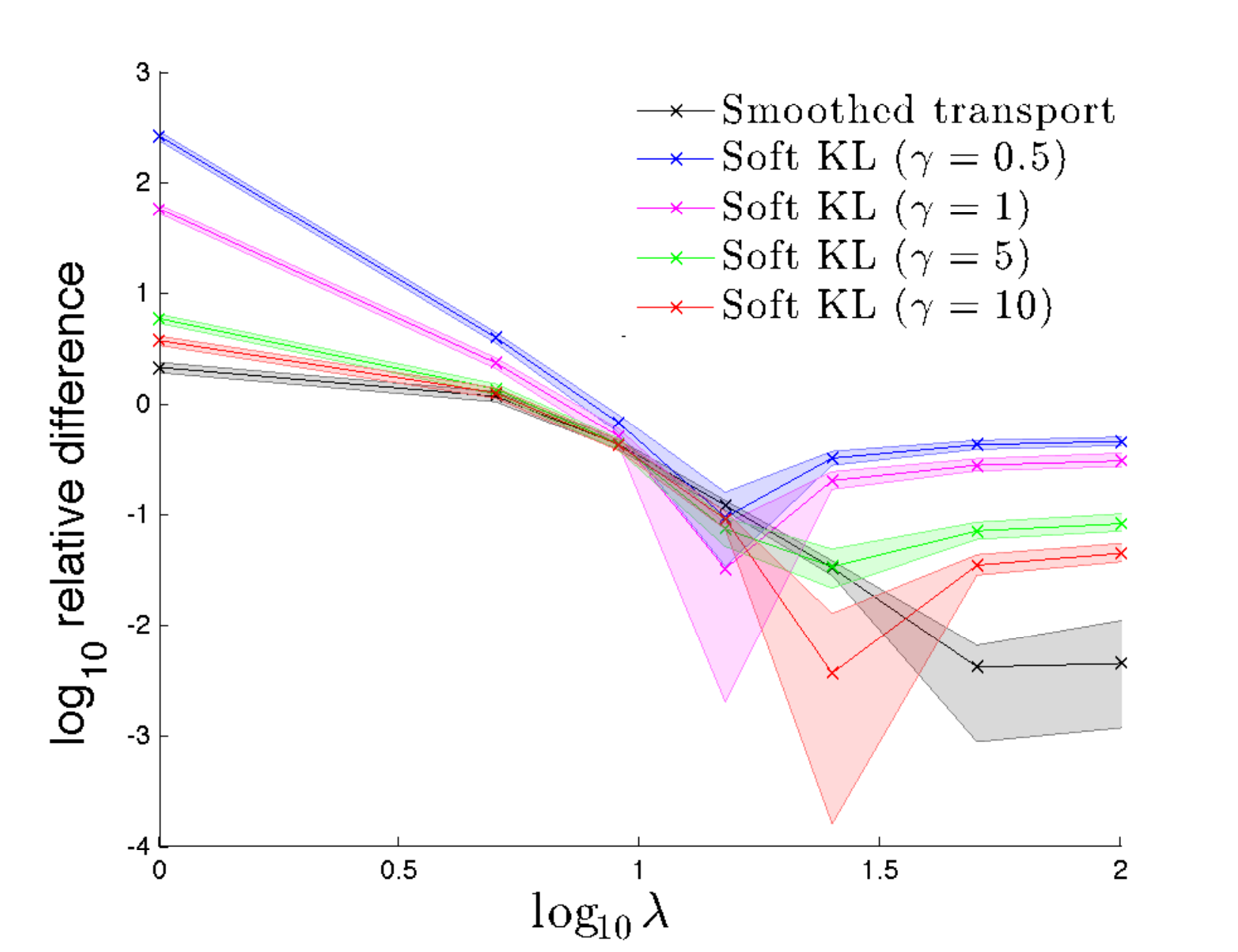}
}\vskip-.4em
\caption{Approximation of exact Wasserstein.}
\end{subfigure}\hfill
\begin{subfigure}[b]{0.32\textwidth}
\resizebox{\linewidth}{!}{
\includegraphics[width=\linewidth]{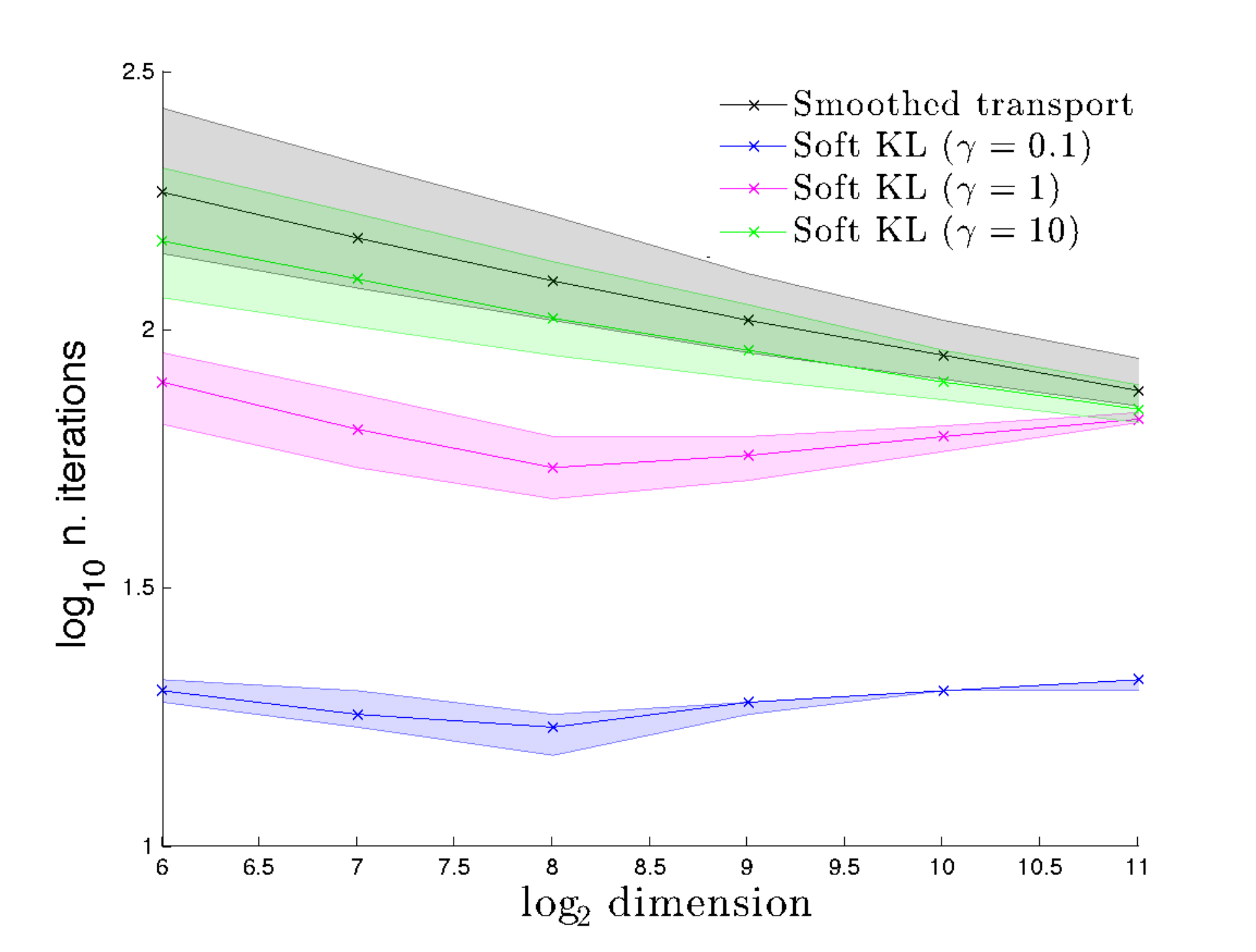}
}\vskip-.4em
\caption{Convergence of alternating projections ($\lambda = 50$).}
\end{subfigure}
\caption{The relaxed transport problem \eqref{eq:soft-kl} for unnormalized measures.}\vskip-1em
\label{fig:relaxed-transport}
\end{figure}

\subsection{Relaxed transport}

We propose a novel relaxation that extends smoothed transport to unnormalized measures. By replacing the equality constraints on the transport marginals in \eqref{eq:regularized-primal} with soft penalties with respect to KL divergence, we get an unconstrained approximate transport problem. The resulting objective is:
\begin{equation}
\label{eq:soft-kl}
^{\lambda,\gamma_a,\gamma_b} \SKL(h(\cdot|x),y(\cdot)) = \min_{T \in \RR_+^{K \times K}} \langle T, M\rangle - \frac{1}{\lambda} H(T) + \gamma_a \gKL{T \mathbf{1}}{h(x)} + \gamma_b \gKL{T^\top \mathbf{1}}{y}
\end{equation}
where $\gKL{w}{z} = w^\top \log(w\oslash z) - \mathbf{1}^\top w + \mathbf{1}^\top z$ is the \emph{generalized KL divergence} between $w, z \in \RR_+^K$. Here $\oslash$ represents element-wise division.
As with the previous formulation, the optimal transport matrix with respect to \eqref{eq:soft-kl} is a diagonal scaling of the matrix $\mathbf{K}$.
\begin{proposition}
\label{prop:relaxed-transport}
The transport matrix $T^*$ optimizing \eqref{eq:soft-kl} satisfies
$T^* = \diag{u} \mathbf{K} \diag{v}$,
where $u = \lr{h(x) \oslash T^* \mathbf{1}}^{\gamma_a \lambda}$, $v = \lr{y \oslash (T^*)^\top \mathbf{1}}^{\gamma_b \lambda}$,  and $\mathbf{K} = e^{-\lambda M - 1}$.
\end{proposition}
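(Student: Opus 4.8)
The plan is to treat \eqref{eq:soft-kl} as an unconstrained smooth convex minimization in the entries of $T$ and simply set the gradient to zero. First I would note that the objective is strictly convex and coercive on $\RR_+^{K\times K}$: the $-\frac{1}{\lambda}H(T)$ term is strictly convex and, together with the two generalized-KL penalties (each of which is convex in $T$, being a composition of the convex generalized KL with the linear maps $T\mapsto T\mathbf{1}$ and $T\mapsto T^\top\mathbf{1}$), forces a unique interior minimizer $T^*$ with all entries strictly positive. Hence the first-order optimality condition is exactly $\partial/\partial T_{\kappa,\kappa'}$ of the objective being zero at $T^*$, with no KKT multipliers needed since there are no equality constraints and the minimizer is interior (so the nonnegativity constraints are inactive).

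The next step is the calculation of that partial derivative. Writing $r = T\mathbf{1}$ (row sums) and $s = T^\top\mathbf{1}$ (column sums), I use $\partial\langle T,M\rangle/\partial T_{\kappa,\kappa'} = M_{\kappa,\kappa'}$, $\partial(-\frac{1}{\lambda}H(T))/\partial T_{\kappa,\kappa'} = \frac{1}{\lambda}(\log T_{\kappa,\kappa'} + 1)$, and for the generalized KL term $\gKL{w}{z} = w^\top\log(w\oslash z) - \mathbf{1}^\top w + \mathbf{1}^\top z$ one has $\partial\gKL{w}{z}/\partial w_i = \log(w_i/z_i)$ — the $-\mathbf{1}^\top w$ exactly cancels the derivative of $w_i\log w_i$'s lower-order piece, which is the whole point of the generalized (as opposed to ordinary) KL. Chaining through $r_\kappa = \sum_{\kappa'}T_{\kappa,\kappa'}$ and $s_{\kappa'} = \sum_\kappa T_{\kappa,\kappa'}$, the optimality condition becomes
\begin{equation}
M_{\kappa,\kappa'} + \tfrac{1}{\lambda}\bigl(\log T^*_{\kappa,\kappa'} + 1\bigr) + \gamma_a \log\!\bigl(r_\kappa / h(x)_\kappa\bigr) + \gamma_b \log\!\bigl(s_{\kappa'} / y_{\kappa'}\bigr) = 0 .
\end{equation}
Solving for $T^*_{\kappa,\kappa'}$ gives $T^*_{\kappa,\kappa'} = e^{-\lambda M_{\kappa,\kappa'} - 1}\cdot \bigl(h(x)_\kappa / r_\kappa\bigr)^{\gamma_a\lambda}\cdot\bigl(y_{\kappa'} / s_{\kappa'}\bigr)^{\gamma_b\lambda}$, which is precisely $T^* = \diag{u}\,\mathbf{K}\,\diag{v}$ with $u_\kappa = (h(x)_\kappa \oslash r_\kappa)^{\gamma_a\lambda}$, $v_{\kappa'} = (y_{\kappa'}\oslash s_{\kappa'})^{\gamma_b\lambda}$, and $\mathbf{K} = e^{-\lambda M - 1}$, as claimed. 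Since $r = T^*\mathbf{1}$ and $s = (T^*)^\top\mathbf{1}$, this is the stated fixed-point characterization.

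The only delicate point — and where I'd spend the most care — is justifying that the minimizer is interior and that differentiating under the sum is legitimate at the boundary: $-t\log t$ has infinite derivative as $t\to 0^+$, so the entropy term blows up the gradient near any zero entry, which is exactly what rules out boundary minimizers (as long as $\mathbf{K}$ has strictly positive entries, which holds since $M$ is finite). I would state this as: the objective is continuous on the closed positive orthant, strictly convex, and its gradient diverges to $+\infty$ along any direction decreasing a coordinate toward $0$ from a positive point, so the infimum is attained at a unique strictly positive $T^*$ where the stationarity equation above holds. Everything else is the routine chain-rule computation sketched above; no further subtlety is anticipated.
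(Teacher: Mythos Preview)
Your proposal is correct and follows essentially the same approach as the paper: both write the first-order stationarity condition for the unconstrained objective, compute the partial derivative term by term, and solve for $T^*_{\kappa,\kappa'}$ to obtain the diagonal-scaling form. If anything, you are more careful than the paper, which simply writes the first-order condition and adds ``(if it exists)''; your remarks on strict convexity, coercivity, and the entropy term forcing an interior minimizer supply the justification the paper omits.
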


And the optimal transport matrix is a fixed point for a Sinkhorn-like iteration. \footnote{Note that, although the iteration suggested by Proposition \ref{prop:soft-kl-iteration} is observed empirically to converge (see Figure \ref{fig:relaxed-transport}c, for example), we have not proven a guarantee that it will do so.}
\begin{proposition}
\label{prop:soft-kl-iteration}
$T^* = \diag{u} \mathbf{K} \diag{v}$ optimizing \eqref{eq:soft-kl} satisfies:
i) $u = h(x)^{\frac{\gamma_a \lambda}{\gamma_a \lambda + 1}} \odot \lr{\mathbf{K} v}^{-\frac{\gamma_a \lambda}{\gamma_a \lambda + 1}}$, and
ii) $v = y^{\frac{\gamma_b \lambda}{\gamma_b \lambda + 1}} \odot \lr{\mathbf{K}^\top u}^{-\frac{\gamma_b \lambda}{\gamma_b \lambda + 1}}$,
where $\odot$ represents element-wise multiplication.
\end{proposition}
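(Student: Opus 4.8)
The plan is to obtain the two fixed-point identities directly from Proposition \ref{prop:relaxed-transport}, which already establishes that the minimizer of \eqref{eq:soft-kl} has the diagonal-scaling form $T^* = \diag{u}\mathbf{K}\diag{v}$ with $u = \lr{h(x) \oslash T^*\mathbf{1}}^{\gamma_a \lambda}$ and $v = \lr{y \oslash (T^*)^\top\mathbf{1}}^{\gamma_b \lambda}$, all operations being element-wise. The only new ingredient is to substitute the structural form of $T^*$ into these expressions and solve the resulting componentwise scalar equation for $u$ (resp. $v$).

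First I would compute the row marginal of the diagonal scaling. Since $\diag{v}\mathbf{1} = v$, we get $T^*\mathbf{1} = \diag{u}\mathbf{K} v = u \odot (\mathbf{K} v)$. Substituting this into the identity for $u$ from Proposition \ref{prop:relaxed-transport}, and using that element-wise exponentiation distributes over element-wise products and quotients, gives
\[
u = \lr{h(x) \oslash \lr{u \odot (\mathbf{K} v)}}^{\gamma_a \lambda} = h(x)^{\gamma_a \lambda} \odot u^{-\gamma_a \lambda} \odot (\mathbf{K} v)^{-\gamma_a \lambda}.
\]
Multiplying through by $u^{\gamma_a \lambda}$ yields $u^{\gamma_a \lambda + 1} = h(x)^{\gamma_a \lambda} \odot (\mathbf{K} v)^{-\gamma_a \lambda}$, and taking the positive $(\gamma_a \lambda + 1)$-th root componentwise gives precisely identity (i). Identity (ii) follows from the symmetric computation: $(T^*)^\top\mathbf{1} = \diag{v}\mathbf{K}^\top u = v \odot (\mathbf{K}^\top u)$, substituted into $v = \lr{y \oslash (T^*)^\top\mathbf{1}}^{\gamma_b \lambda}$ and rearranged in the same way.

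The computation is essentially bookkeeping, so there is no serious obstacle; the one point needing a word of care is the well-definedness of the componentwise roots and reciprocals. Since $\mathbf{K} = e^{-\lambda M - 1}$ has strictly positive entries and, by Proposition \ref{prop:relaxed-transport}, $u$ and $v$ arise as nonnegative powers of nonnegative vectors, the vectors $u \odot (\mathbf{K} v)$ and $v \odot (\mathbf{K}^\top u)$ are strictly positive on the support of $h(x)$ and $y$ respectively, so the divisions and fractional powers above are legitimate there; on coordinates where $h(x)$ or $y$ vanishes, the corresponding entry of $u$ or $v$ is zero and both sides of the claimed identity agree trivially. (If one prefers a self-contained argument, the same identities can be read off directly from the first-order optimality conditions of the strictly convex program \eqref{eq:soft-kl} in $T$: exponentiating the stationarity equations recovers the diagonal-scaling form, and the fixed-point identities are then just the algebraic rearrangement carried out above.)
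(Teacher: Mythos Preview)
Your proposal is correct and follows essentially the same route as the paper: both start from Proposition~\ref{prop:relaxed-transport}, compute $T^*\mathbf{1} = u \odot (\mathbf{K} v)$, substitute back into the expression for $u$, and rearrange the resulting componentwise scalar equation to obtain $u^{\gamma_a\lambda+1} = h(x)^{\gamma_a\lambda}\odot(\mathbf{K} v)^{-\gamma_a\lambda}$ (with the symmetric argument for $v$). Your additional remark on positivity and well-definedness of the fractional powers is a nice touch that the paper omits, but otherwise the arguments coincide.
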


Unlike the previous formulation, \eqref{eq:soft-kl} is unconstrained with respect to $h(x)$. The gradient is given by $\nabla_{h(x)} \SKL(h(\cdot|x),y(\cdot)) = \gamma_a \lr{\mathbf{1} - T^* \mathbf{1}  \oslash h(x)}$. The iteration is given in Algorithm \ref{algo:subgradient}.

When restricted to normalized measures, the relaxed problem \eqref{eq:soft-kl} approximates smoothed transport \eqref{eq:regularized-primal}. Figure \ref{fig:relaxed-transport}a shows, for normalized $h(x)$ and $y$, the relative distance between the values of \eqref{eq:soft-kl} and \eqref{eq:regularized-primal} \footnote{In figures \ref{fig:relaxed-transport}a-c, $h(x)$, $y$ and $M$ are generated as described in \cite{Cuturi:2013wo} section 5. In \ref{fig:relaxed-transport}a-b, $h(x)$ and $y$ have dimension $256$. In 3c, convergence is defined as in \cite{Cuturi:2013wo}. Shaded regions are $95\%$ intervals.}. For $\lambda$ large enough, \eqref{eq:soft-kl} converges to \eqref{eq:regularized-primal} as $\gamma_a$ and $\gamma_b$ increase.

\eqref{eq:soft-kl} also retains two properties of smoothed transport \eqref{eq:regularized-primal}. Figure \ref{fig:relaxed-transport}b shows that, for normalized outputs, the relaxed loss converges to the unregularized Wasserstein distance as $\lambda$, $\gamma_a$ and $\gamma_b$ increase \footnote{The unregularized Wasserstein distance was computed using {\tt FastEMD} \cite{Pele:2009hk}.}. And Figure \ref{fig:relaxed-transport}c shows that convergence of the iterations in \eqref{prop:soft-kl-iteration} is nearly independent of the dimension $K$ of the output space.

\section{Statistical Properties of the Wasserstein loss}
\label{sec:generalization-error}



Let $S=\left((x_1,y_1),\ldots,(x_N,y_N)\right)$ be i.i.d. samples and $h_{\hat{\theta}}$ be the empirical risk minimizer
\[
  h_{\hat{\theta}} = \argmin_{h_\theta\in\mathcal{H}} \left\{
   \hat{\EE}_S\left[W_p^p(h_\theta (\cdot|x),y)\right]
  = \frac{1}{N}\sum_{i=1}^N W_p^p(h_x\theta(\cdot|x_i),y_i)
  \right\}.
\]
Further assume $\mathcal{H}=\mathfrak{s}\circ\mathcal{H}^o$ is the composition of a
softmax $\mathfrak{s}$ and a base hypothesis space $\mathcal{H}^o$ of functions mapping into
$\RR^K$. The softmax layer outputs a prediction that lies in the simplex $\Delta^{\mathcal{K}}$.

\begin{theorem}
For $p=1$, and any $\delta>0$, with probability at least $1-\delta$, it holds that
  \begin{equation}
    \EE\left[W_1^1(h_{\hat{\theta}}(\cdot|x), y)\right]
    \leq \inf_{h_\theta\in\mathcal{H}} \EE\left[W_1^1(h_{\theta}(\cdot|x),y)\right] + 32KC_M
    \mathfrak{R}_N(\mathcal{H}^o) + 2C_M\sqrt {\frac{\log(1/\delta)}{2N}}
  \end{equation}
  with the constant $C_M=\max_{\kappa,\kappa'}M_{\kappa,\kappa'}$. $\mathfrak{R}_N(\mathcal{H}^o)$
  is the \emph{Rademacher complexity} \cite{Bartlett2003-kp} measuring the complexity of the
  hypothesis space $\mathcal{H}^o$.
  \label{thm:main-theorem}
\end{theorem}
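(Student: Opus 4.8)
The plan is the standard Rademacher-complexity route for excess-risk bounds of empirical risk minimizers; the real content is in passing from the complexity of the \emph{loss class} to that of the base class $\mathcal{H}^o$, and the restriction $p=1$ is exactly what makes that step work. First I would record that the loss is bounded: for $h_\theta(x), y \in \Delta^{\mathcal{K}}$, any feasible transport plan $T \in \Pi(h_\theta(x), y)$ has nonnegative entries summing to $\mathbf{1}^\top y = 1$, so $0 \le W_1^1(h_\theta(\cdot|x), y) = \langle T^*, M\rangle \le C_M$. Writing the loss class as $\tilde{\mathcal{H}} = \{(x,y) \mapsto W_1^1(h_\theta(\cdot|x), y): \theta \in \Theta\}$, a bounded-difference (McDiarmid) argument applied to $\Phi(S) = \sup_\theta |\EE[W_1^1(h_\theta(\cdot|x),y)] - \hat{\EE}_S[W_1^1(h_\theta(\cdot|x),y)]|$ (each sample perturbs $\Phi$ by at most $C_M/N$), together with the usual symmetrization inequality, gives with probability at least $1-\delta$ a uniform deviation bound of the form $\Phi(S) \le c_1 \mathfrak{R}_N(\tilde{\mathcal{H}}) + C_M\sqrt{\log(1/\delta)/(2N)}$; the routine ERM comparison $\EE[W_1^1(h_{\hat\theta})] \le \inf_{h_\theta}\EE[W_1^1(h_\theta)] + 2\Phi(S)$ then reduces everything to replacing $\mathfrak{R}_N(\tilde{\mathcal{H}})$ by $\mathfrak{R}_N(\mathcal{H}^o)$ at the price of a factor $O(KC_M)$.

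The heart of the proof is controlling $\mathfrak{R}_N(\tilde{\mathcal{H}})$. Here I would use that for $p=1$ the cost matrix $M = d_{\mathcal{K}}$ is a metric, so $W_1^1$ is itself a metric on $\Delta^{\mathcal{K}}$ — in particular the triangle inequality holds — and moreover $W_1^1(h,h') \le \tfrac{C_M}{2}\|h - h'\|_1$, since it suffices to move the excess mass $\sum_\kappa (h_\kappa - h'_\kappa)_+ = \tfrac12\|h-h'\|_1$, each unit over distance at most $C_M$. Consequently, for fixed $y$, the map $h \mapsto W_1^1(h, y)$ is $\tfrac{C_M}{2}$-Lipschitz with respect to $\ell_1$; equivalently, by the dual LP and strong duality stated above, the optimal dual vector $\alpha$ may be taken $1$-Lipschitz w.r.t. $d_{\mathcal{K}}$ and, after centering, bounded in $\ell_\infty$ by the diameter $C_M$, giving the same estimate. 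Composing with the softmax $\mathfrak{s}$, whose Jacobian $\diag{p} - pp^\top$ has $\ell_1\!\to\!\ell_1$ operator norm at most $\tfrac12$ (so $\|\mathfrak{s}(z)-\mathfrak{s}(z')\|_1 \le \tfrac12\|z-z'\|_1$), the composite loss-and-softmax layer $z \mapsto W_1^1(\mathfrak{s}(z), y)$ is Lipschitz in each of its $K$ coordinates with a constant of order $C_M$. I would then peel this layer off the vector-valued class $\mathcal{H}^o$ by a vector-contraction inequality for Rademacher averages applied coordinatewise; this is precisely the step that introduces the factor $K$ (one contribution per output coordinate), yielding $\mathfrak{R}_N(\tilde{\mathcal{H}}) \le O(KC_M)\,\mathfrak{R}_N(\mathcal{H}^o)$.

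Substituting this into the skeleton of the first paragraph and collecting the constants from the two symmetrization steps, the ERM comparison, the softmax contraction, and the vector-contraction lemma produces the stated inequality, with the (unoptimized) factor $32$ in the term $32KC_M\,\mathfrak{R}_N(\mathcal{H}^o)$ and the concentration term $2C_M\sqrt{\log(1/\delta)/(2N)}$. I expect the main obstacle to be the middle step: one must both pin down the Lipschitz constant of $W_1^1(\cdot,y)$ — which genuinely uses $p=1$, since for $p>1$ the map $W_p^p$ fails to be a metric and its dual optimizers are no longer controlled by the diameter — and carry out the reduction from the vector-valued complexity of $\mathcal{H}^o$ to its scalar coordinate classes without losing more than the factor $K$.
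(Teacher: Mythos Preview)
Your proposal is correct and follows essentially the same route as the paper: bounded loss, McDiarmid plus symmetrization for uniform concentration, the standard ERM comparison, and a vector-contraction step that peels off the Wasserstein-plus-softmax layer at the cost of a factor $K$, with the restriction $p=1$ used exactly where you indicate (metric property of $W_1$ and the resulting Lipschitz bound). The paper differs only in technical choices: it works with $\ell_2$-Lipschitz constants---showing $\|\mathfrak{s}(z)-\mathfrak{s}(\bar z)\|_1 \le 2\|z-\bar z\|_2$ and invoking a Villani-type bound $W_1^1(\mu,\nu)\le C_M\|\mu-\nu\|_1$ to get a composite $4C_M$-Lipschitz map in $\ell_2$---and it handles the dependence on $y$ not by fixing it per sample but by passing the label through softmax as well and applying the Ledoux--Talagrand contraction to the $2K$-coordinate product $(\mathcal{H}^o)^K\times\mathcal{I}^K$, the identity coordinates contributing zero Rademacher complexity; this is what makes the constants $2\cdot 2\cdot (2\cdot 4C_M)\cdot K = 32KC_M$ fall out exactly.
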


The Rademacher complexity $\mathfrak{R}_N(\mathcal{H}^o)$ for commonly used models like neural
networks and kernel machines \cite{Bartlett2003-kp} decays with the
training set size. This theorem guarantees that the expected
Wasserstein loss of the empirical risk minimizer approaches the best achievable loss for $\mathcal
{H}$.

As an important special case, minimizing the empirical risk with Wasserstein loss
is also good for multiclass classification. Let $y=\mathbbm{e}_\kappa$ be the
``one-hot'' encoded label vector for the groundtruth class.

\begin{proposition}
  In the multiclass classification setting, for $p=1$ and any $\delta>0$, with probability at least
  $1-\delta$, it holds that
  \begin{equation}
   \EE_{x,\kappa}\left[d_\mathcal{K}(\kappa_{\hat{\theta}}(x), \kappa)\right]
   \leq \inf_{h_\theta\in\mathcal{H}}K\EE[W_1^1(h_\theta(x),y)] + 32K^2C_M\mathfrak{R}_N
   (\mathcal{H}^o) + 2C_MK\sqrt{\frac{\log(1/\delta)}{2N}}
  \end{equation}
  where the predictor is $
\kappa_{\hat{\theta}}(x)=\argmax_\kappa h_{\hat{\theta}}(\kappa|x)$, with $h_{\hat{\theta}}$ being
the empirical risk minimizer.
\label{prop:multiclass-bound}
\end{proposition}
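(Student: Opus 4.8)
The plan is to reduce Proposition \ref{prop:multiclass-bound} to Theorem \ref{thm:main-theorem} by a deterministic, pointwise argument, so that no new probabilistic work is required. The key observation is that when the label is a one-hot vector $y=\mathbbm{e}_\kappa$, the Wasserstein loss has a simple closed form: any $T\in\Pi(h(x),\mathbbm{e}_\kappa)$ must satisfy $T^\top\mathbf{1}=\mathbbm{e}_\kappa$, which forces every column of $T$ except the $\kappa$-th to vanish, so $T$ is uniquely determined with $T_{\kappa',\kappa}=h(\kappa'|x)$, and
\[
W_1^1(h(\cdot|x),\mathbbm{e}_\kappa) = \sum_{\kappa'\in\mathcal{K}} h(\kappa'|x)\, d_\mathcal{K}(\kappa',\kappa).
\]

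First I would lower-bound this sum by a single term. Since $h(\cdot|x)$ lies in the simplex $\Delta^{\mathcal{K}}$, its largest coordinate is at least $1/K$, so $h_{\hat{\theta}}(\kappa_{\hat{\theta}}(x)|x)\geq 1/K$ for $\kappa_{\hat{\theta}}(x)=\argmax_\kappa h_{\hat{\theta}}(\kappa|x)$. Keeping only the term for $\kappa'=\kappa_{\hat{\theta}}(x)$ and discarding the remaining nonnegative terms yields the pointwise inequality
\[
d_\mathcal{K}(\kappa_{\hat{\theta}}(x),\kappa) \leq K\, h_{\hat{\theta}}(\kappa_{\hat{\theta}}(x)|x)\, d_\mathcal{K}(\kappa_{\hat{\theta}}(x),\kappa) \leq K\, W_1^1(h_{\hat{\theta}}(\cdot|x),\mathbbm{e}_\kappa).
\]
Ties in the $\argmax$ may be broken arbitrarily; the bound is unaffected.

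Next I would take the expectation over $(x,\kappa)\sim\mathcal{P}_{\mathcal{X}\times\mathcal{Y}}$ of both sides, giving $\EE_{x,\kappa}[d_\mathcal{K}(\kappa_{\hat{\theta}}(x),\kappa)] \leq K\,\EE[W_1^1(h_{\hat{\theta}}(\cdot|x),y)]$, and then invoke Theorem \ref{thm:main-theorem} with $p=1$ to bound $\EE[W_1^1(h_{\hat{\theta}}(\cdot|x),y)]$, with probability at least $1-\delta$, by $\inf_{h_\theta\in\mathcal{H}}\EE[W_1^1(h_\theta(\cdot|x),y)] + 32KC_M\mathfrak{R}_N(\mathcal{H}^o) + 2C_M\sqrt{\log(1/\delta)/(2N)}$. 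Multiplying through by $K$ produces exactly the claimed inequality.

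There is no real obstacle here beyond Theorem \ref{thm:main-theorem} itself; the new content is just the elementary reduction above. The only points needing a little care are (i) verifying that $\Pi(h(x),\mathbbm{e}_\kappa)$ is a singleton, so the closed form for $W_1^1$ is legitimate, and (ii) noting that the argmax predictor places mass at least $1/K$ on its output coordinate — both immediate. One may also remark that the factor $K$ is the price of converting a soft transport cost into a hard statement about the top prediction, and is unavoidable in general without further structural assumptions on $h_{\hat{\theta}}$.
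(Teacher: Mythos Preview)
Your proposal is correct and follows essentially the same route as the paper: you identify that $\Pi(h(x),\mathbbm{e}_\kappa)$ is a singleton so that $W_1^1$ has the closed form $\sum_{\kappa'} h(\kappa'|x)\,d_\mathcal{K}(\kappa',\kappa)$, use $h_{\hat{\theta}}(\kappa_{\hat{\theta}}(x)|x)\geq 1/K$ to obtain the pointwise bound $d_\mathcal{K}(\kappa_{\hat{\theta}}(x),\kappa)\leq K\,W_1^1(h_{\hat{\theta}}(\cdot|x),\mathbbm{e}_\kappa)$, and then multiply the conclusion of Theorem~\ref{thm:main-theorem} by $K$. This is exactly the paper's argument.
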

Note that instead of the classification error $\EE_{x,\kappa}[\mathbbm{1}\{\kappa_{\hat{\theta}}
(x)\neq \kappa\}]$, we actually get a bound on the expected semantic distance between the prediction
and the groundtruth.

\section{Empirical study}


\subsection{Impact of the ground metric}
\label{sec:mnist}

In this section, we show that the Wasserstein loss encourages smoothness with respect to an artificial metric on the MNIST handwritten digit dataset.
This is a multi-class classification problem with output dimensions corresponding to the 10 digits, and we apply a ground metric $d_p(\kappa,\kappa') = |\kappa-\kappa'|^p$, where $\kappa,\kappa'\in
\{0,\ldots,9\}$ and $p\in [0,\infty)$. This metric
encourages the recognized digit to be \emph{numerically} close to the true one. We train a model independently for each value of $p$ and plot the average predicted probabilities of the different digits on the test set in Figure~\ref{fig:mnist}.
\begin{figure}
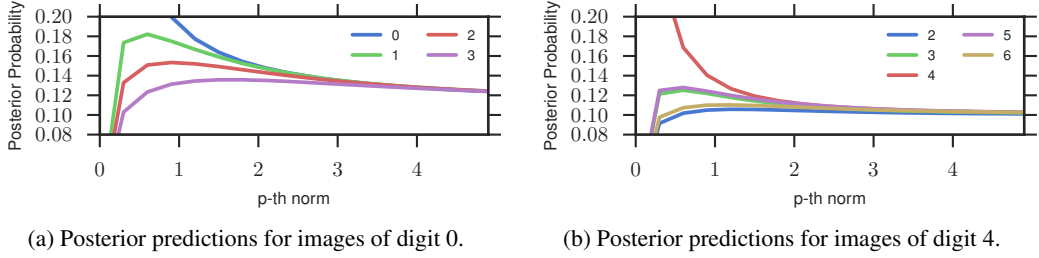

\centering
\begin{subfigure}[b]{0.49\textwidth}
\resizebox{\linewidth}{!}{
\input{0.pgf}
}\vskip-.4em
\caption{Posterior predictions for images of digit 0.}
\end{subfigure}\hfill
\begin{subfigure}[b]{0.49\textwidth}
\resizebox{\linewidth}{!}{
\input{4.pgf}
}\vskip-.4em
\caption{Posterior predictions for images of digit 4.}
\end{subfigure}
\caption{MNIST example. Each curve shows the predicted probability for one digit, for models trained with different $p$ values for the ground metric.}
\label{fig:mnist}
\end{figure}

Note that as $p\rightarrow 0$, the metric approaches the $0-1$ metric $d_0 (\kappa,\kappa')=\mathbbm {1}_
{\kappa\neq \kappa'}$, which treats all incorrect digits as being equally unfavorable. In this case, as
can be seen in the figure, the predicted probability of the true digit goes to 1 while the probability for all other digits goes to 0. As $p$ increases, the predictions become more evenly distributed over the neighboring digits, converging to a uniform distribution as $p\rightarrow\infty$ \footnote{To avoid numerical issues, we scale down the ground metric such that all of the distance values are in the interval $[0, 1)$.}.

\subsection{Flickr tag prediction}

We apply the Wasserstein loss to a real world multi-label learning problem, using the recently released Yahoo/Flickr Creative Commons 100M dataset \cite{thomee2015yfcc100m}. \footnote{The dataset used here is available at \url{http://cbcl.mit.edu/wasserstein}.} Our goal is \emph{tag prediction}: we select 1000 descriptive tags along with two random sets of 10,000 images each, associated with these tags, for training and testing. We derive a distance metric between tags by using \texttt
{word2vec}
\cite{mikolov2013distributed}
to embed the tags as unit vectors, then taking their Euclidean
distances. To extract image features we use \texttt{MatConvNet} \cite{arXiv:1412.4564}.
Note that the set of tags is highly redundant and often many semantically equivalent or similar tags can apply to an image. The images are also partially tagged, as different users may prefer different tags. We therefore measure the prediction performance by the
\emph {top-K cost}, defined as
$C_K = 1/K\sum_{k=1}^K \min_j d_\mathcal{K}(\hat{\kappa}_k,\kappa_j)$,
where $\{\kappa_j\}$ is the set of groundtruth tags, and $\{\hat{\kappa}_k\}$ are the tags with highest predicted probability. The standard AUC measure is also reported.

We find that a linear combination of the Wasserstein loss $W_p^p$ and the standard multiclass logistic loss $\mathsf{KL}$ yields the best prediction results. Specifically, we train a linear model by minimizing $W_p^p + \alpha\mathsf{KL}$ on the training set, where $\alpha$ controls the relative weight of $\mathsf{KL}$. Note that $\mathsf{KL}$ taken alone is our baseline in these experiments. Figure~\ref{fig:flickr-cost}a shows the top-K cost on the test set for the combined loss and the baseline $\mathsf{KL}$ loss. We additionally create a second dataset by removing redundant labels from the original dataset: this simulates the potentially more difficult case in which a single user tags each image, by selecting one tag to apply from amongst each cluster of applicable, semantically similar tags. Figure 3b shows that performance for both algorithms decreases on the harder dataset, while the combined Wasserstein loss continues to outperform the baseline.

\begin{figure}
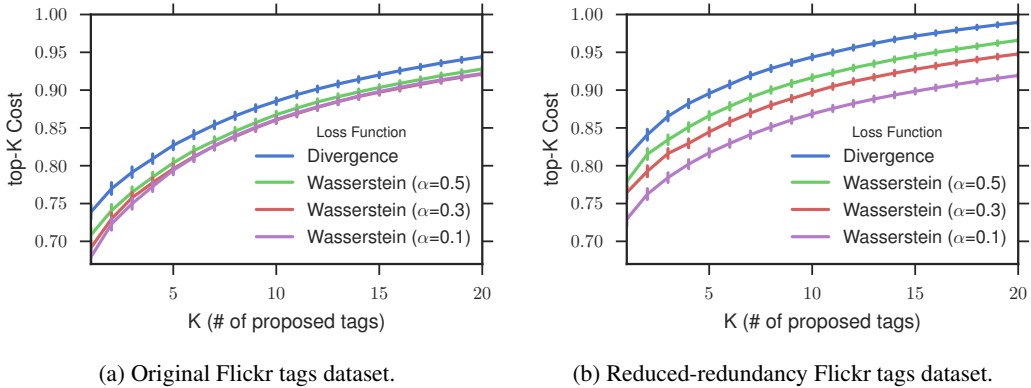

\centering
\begin{subfigure}[b]{0.49\textwidth}
\resizebox{\linewidth}{!}{
\input{semantic-cost-basic.pgf}
}
\caption{Original Flickr tags dataset.}
\end{subfigure}\hfill
\begin{subfigure}[b]{0.49\textwidth}
\resizebox{\linewidth}{!}{
\input{semantic-cost-harder.pgf}
}
\caption{Reduced-redundancy Flickr tags dataset.}
\end{subfigure}
\caption{Top-K cost comparison of the proposed loss (Wasserstein) and the baseline (Divergence).}
\vskip-0.5cm
\label{fig:flickr-cost}
\end{figure}

In Figure~\ref{fig:flickr-tradeoff}, we show the effect on performance of varying the weight $\alpha$ on the KL loss. We observe that the optimum of the top-$K$ cost is achieved when the Wasserstein loss is weighted more heavily than at the optimum of the AUC. This is consistent with a semantic smoothing effect of Wasserstein, which during training will favor mispredictions that are semantically similar to the ground truth, sometimes at the cost of lower AUC \footnote{The Wasserstein loss can achieve a similar trade-off by choosing the metric parameter $p$, as discussed in Section~\ref{sec:mnist}.
However, the relationship between $p$ and the smoothing behavior is complex and it can be simpler to implement the trade-off by combining with the $\mathsf {KL}$ loss.}. We finally show two selected images from the test set in Figure~\ref{fig:flickr-eg}. These illustrate cases in which both algorithms make predictions that are semantically relevant, despite overlapping very little with the ground truth. The image on the left shows errors made by both algorithms. More examples can be found in the appendix.

\begin{figure}
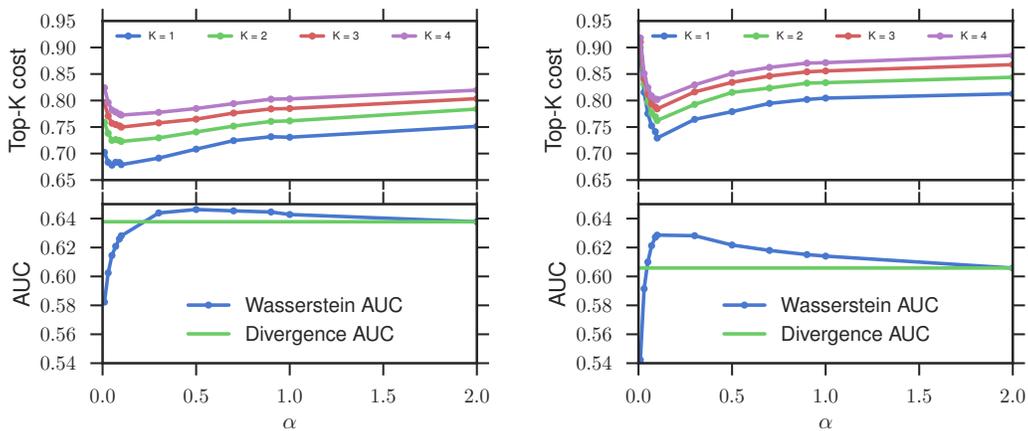

\centering
\begin{subfigure}[b]{0.49\textwidth}
\resizebox{\linewidth}{!}{
\input{trade-off-cost-basic.pgf}
}\vskip-2.0em
\resizebox{\linewidth}{!}{
\input{trade-off-auc-basic.pgf}
}\vskip-.8em
\caption{Original Flickr tags dataset.}
\end{subfigure}\hfill
\begin{subfigure}[b]{0.49\textwidth}
\resizebox{\linewidth}{!}{
\input{trade-off-cost-harder.pgf}
}\vskip-2.0em
\resizebox{\linewidth}{!}{
\input{trade-off-auc-harder.pgf}
}\vskip-.8em
\caption{Reduced-redundancy Flickr tags dataset.}
\end{subfigure}
\caption{Trade-off between semantic smoothness and maximum likelihood.}
\label{fig:flickr-tradeoff}
\end{figure}

\begin{figure}
\centering
\begin{subfigure}[b]{.45\textwidth}
  \includegraphics[width=\linewidth]{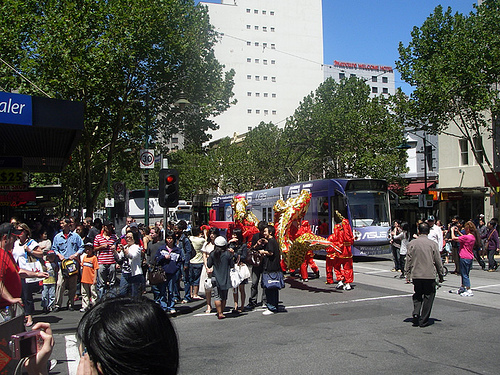}
  \caption{\textbf{Flickr user tags}: street, parade, dragon; \textbf{our proposals}: people, protest, parade;
  \textbf{baseline proposals}: music, car, band.}
\end{subfigure}\hfill
\begin{subfigure}[b]{.45\textwidth}
  \includegraphics[width=\linewidth]{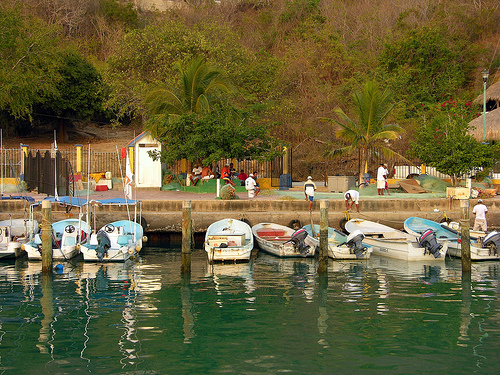}
  \caption{\textbf{Flickr user tags}: water, boat, reflection, sunshine; \textbf{our proposals}: water, river,
  lake, summer; \textbf{baseline proposals}: river, water, club, nature.}
\end{subfigure}
\caption{Examples of images in the Flickr dataset. We show the groundtruth tags and as well as tags
proposed by our algorithm and the baseline.}
\label{fig:flickr-eg}
\end{figure}

\section{Conclusions and future work}

In this paper we have described a loss function for learning to predict a non-negative measure over a finite set, based on the Wasserstein distance. Although optimizing with respect to the exact Wasserstein loss is computationally costly, an approximation based on entropic regularization is efficiently computed. We described a learning algorithm based on this regularization and we proposed a novel extension of the regularized loss to unnormalized measures that preserves its efficiency. We also described a statistical learning bound for the loss. The Wasserstein loss can encourage smoothness of the predictions with respect to a chosen metric on the output space, and we demonstrated this property on a real-data tag prediction problem, showing improved performance over a baseline that doesn't incorporate the metric.

An interesting direction for future work may be to explore the connection between the Wasserstein loss and Markov random fields, as the latter are often used to encourage smoothness of predictions, via inference at prediction time.

\newpage
\bibliographystyle{unsrt}
{\small
\bibliography{nips2015}
}
\newpage

\appendix

\section{Relaxed transport}

Equation \eqref{eq:soft-kl} gives the relaxed transport objective as
\begin{equation*}
^{\lambda,\gamma_a,\gamma_b} \SKL(h(\cdot|x),y(\cdot)) = \min_{T \in \RR_+^{K \times K}} \langle T, M\rangle - \frac{1}{\lambda} H(T) + \gamma_a \gKL{T \mathbf{1}}{h(x)} + \gamma_b \gKL{T^\top \mathbf{1}}{y}
\end{equation*}
with $\gKL{w}{z} = w^\top \log(w\oslash z) - \mathbf{1}^\top w + \mathbf{1}^\top z$.

\begin{proof}[Proof of Proposition \ref{prop:relaxed-transport}]
The first order condition for $T^*$ optimizing \eqref{eq:soft-kl} is
\begin{align*}
 & M_{ij} + \frac{1}{\lambda} \lr{\log T_{ij}^* + 1} + \gamma_a \lr{\log T^* \mathbf{1} \oslash h(x)}_i + \gamma_b \lr{\log (T^*)^\top \mathbf{1} \oslash y}_j = 0. \\
\Rightarrow & \log T_{ij}^* + \gamma_a \lambda \log \lr{T^* \mathbf{1} \oslash h(x_i)}_i + \gamma_b \lambda \log \lr{(T^*)^\top \mathbf{1} \oslash y_j}_j = -\lambda M_{ij} - 1 \\
\Rightarrow & T_{ij}^* \lr{T^* \mathbf{1} \oslash h(x)}_i^{\gamma_a \lambda} \lr{(T^*)^\top \mathbf{1} \oslash y}_j^{\gamma_b \lambda} = \exp\lr{-\lambda M_{ij} - 1} \\
 \Rightarrow & T_{ij}^* = \lr{h(x) \oslash T^* \mathbf{1}}_i^{\gamma_a \lambda} \lr{y \oslash (T^*)^\top \mathbf{1}}_j^{\gamma_b \lambda} \exp\lr{-\lambda M_{ij} - 1} \\
\end{align*}
Hence $T^*$ (if it exists) is a diagonal scaling of $\mathbf{K} = \exp\lr{-\lambda M - 1}$.

\end{proof}

\begin{proof}[Proof of Proposition \ref{prop:soft-kl-iteration}]
Let $u = \lr{h(x) \oslash T^* \mathbf{1}}^{\gamma_a \lambda}$ and $v = \lr{y \oslash (T^*)^\top \mathbf{1}}^{\gamma_b \lambda}$, so $T^* = \diag u \mathbf{K} \diag v$.
We have
\begin{align*}
& T^* \mathbf{1} = \diag u \mathbf{K} v \\
\Rightarrow & \lr{T^* \mathbf{1}}^{\gamma_a \lambda + 1} = h(x)^{\gamma_a \lambda} \odot \mathbf{K} v \\
\end{align*}
where we substituted the expression for $u$. Re-writing $T^* \mathbf{1}$,
\begin{align*}
& \lr{\diag u \mathbf{K} v}^{\gamma_a \lambda + 1} = \diag{h(x)^{\gamma_a \lambda}} \mathbf{K} v \\
\Rightarrow & u^{\gamma_a \lambda + 1} = h(x)^{\gamma_a \lambda} \odot \lr{\mathbf{K} v}^{-\gamma_a \lambda} \\
\Rightarrow & u = h(x)^{\frac{\gamma_a \lambda}{\gamma_a \lambda + 1}} \odot (\mathbf{K} v)^{-\frac{\gamma_a \lambda}{\gamma_a \lambda + 1}}. \\
\end{align*}
A symmetric argument shows that $v = y^{\frac{\gamma_b \lambda}{\gamma_b \lambda + 1}} \odot (\mathbf{K}^\top u)^{-\frac{\gamma_b \lambda}{\gamma_b \lambda + 1}}$.
\end{proof}

\section{Statistical Learning Bounds}
We establish the proof of Theorem~\ref{thm:main-theorem} in this section. For simpler notation, for
a sequence $S=((x_1,y_1),\ldots,(x_N,y_N))$ of i.i.d. training samples, we
denote the empirical risk $\hat{R}_S$ and risk $R$ as
\begin{equation}
  \hat{R}_S(h_\theta) = \hat{\EE}_S\left[W_p^p(h_\theta(\cdot|x),y(\cdot))\right],\quad
  R(h_\theta) = \EE\left[W_p^p(h_\theta(\cdot|x),y(\cdot))\right]
\end{equation}

\begin{lemma}
Let $h_{\hat{\theta}},h_{\theta^*}\in\mathcal{H}$ be the minimizer of the empirical risk $\hat{R}_S$
and expected risk $R$, respectively. Then
\[
R(h_{\hat{\theta}}) \leq R(h_{\theta^*}) + 2\sup_{h\in\mathcal{H}}|R(h)-\hat{R}_S(h)|
\]
\label{lem:oracle-ineq-0}
\end{lemma}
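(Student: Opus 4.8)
This is a standard oracle inequality, so the plan is short. The statement bounds the excess risk of the empirical risk minimizer $h_{\hat\theta}$ by the excess risk of the best predictor $h_{\theta^*}$ (which is $0$ since $h_{\theta^*}$ minimizes $R$, but the lemma is stated in this slightly more general form) plus twice the uniform deviation $\sup_{h\in\mathcal H}|R(h)-\hat R_S(h)|$. The whole argument is a three-term telescoping bound.

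Concretely, I would write $\Delta = \sup_{h\in\mathcal H}|R(h)-\hat R_S(h)|$ and decompose
\[
R(h_{\hat\theta}) = \left(R(h_{\hat\theta}) - \hat R_S(h_{\hat\theta})\right) + \hat R_S(h_{\hat\theta}) + \left(\hat R_S(h_{\theta^*}) - \hat R_S(h_{\theta^*})\right).
\]
For the first parenthesized term, $R(h_{\hat\theta}) - \hat R_S(h_{\hat\theta}) \le \Delta$ since $h_{\hat\theta}\in\mathcal H$. For the middle term, because $h_{\hat\theta}$ minimizes $\hat R_S$ over $\mathcal H$ and $h_{\theta^*}\in\mathcal H$, we have $\hat R_S(h_{\hat\theta}) \le \hat R_S(h_{\theta^*})$. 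For the remaining piece, $\hat R_S(h_{\theta^*}) - R(h_{\theta^*}) \le \Delta$ again by the uniform bound. Chaining these gives $R(h_{\hat\theta}) \le \Delta + \hat R_S(h_{\theta^*}) \le \Delta + R(h_{\theta^*}) + \Delta = R(h_{\theta^*}) + 2\Delta$, which is exactly the claim.

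There is no real obstacle here; the only thing to be careful about is the direction of each inequality and the fact that both suprema in $\Delta$ are over the same class $\mathcal H$ that contains both $h_{\hat\theta}$ and $h_{\theta^*}$, so each application of the uniform deviation bound is legitimate. I would also note in passing that $W_p^p(h_\theta(\cdot|x),y)$ is bounded (by $C_M$ when the arguments lie in the simplex), which guarantees the risks and empirical risks are finite and the supremum is well-defined; this boundedness is what will later let us control $\Delta$ via Rademacher complexity and McDiarmid's inequality in the proof of Theorem~\ref{thm:main-theorem}, but it is not needed for the present lemma beyond well-definedness. The proof is essentially two lines once the decomposition is written down.
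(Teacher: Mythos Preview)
Your proof is correct and follows exactly the same telescoping argument as the paper: subtract and add $\hat R_S(h_{\hat\theta})$, use the optimality of $h_{\hat\theta}$ to replace $\hat R_S(h_{\hat\theta})$ by $\hat R_S(h_{\theta^*})$, and bound each of the two $R-\hat R_S$ differences by the uniform deviation. (Your displayed decomposition has a small typo---the last parenthesized term is identically zero---but the chain of inequalities you write immediately afterward is the intended and correct argument.)
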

\begin{proof}
By the optimality of $h_{\hat{\theta}}$ for $\hat{R}_S$,
\[
\begin{aligned}
R(h_{\hat{\theta}}) - R(h_{\theta^*}) &=R(h_{\hat{\theta}}) - \hat{R}_S(h_{\hat{\theta}}) + \hat{R}_S(h_{\hat{\theta}}) - R(h_{\theta^*}) \\
&\leq R(h_{\hat{\theta}}) - \hat{R}_S(h_{\hat{\theta}}) + \hat{R}_S(h_{\theta^*}) - R(h_{\theta^*}) \\
&\leq 2\sup_{h\in\mathcal{H}} |R(h)-\hat{R}_S(h)|
\end{aligned}
\]
\end{proof}

Therefore, to bound the risk for $h_{\hat{\theta}}$, we need to establish uniform concentration
bounds for the Wasserstein loss. Towards that goal, we define a space of loss functions induced by
the hypothesis space $\mathcal{H}$ as
\begin{equation}
  \mathcal{L} = \left\{\ell_\theta: (x,y) \mapsto W_p^p(h_\theta(\cdot|x),y(\cdot)):
  h_\theta\in\mathcal {H}\right\}
\end{equation}
The uniform concentration will depends on the ``complexity'' of $\mathcal{L}$, which is measured by
the empirical \emph{Rademacher complexity} defined below.

\begin{definition}[Rademacher Complexity \cite{Bartlett2003-kp}]
Let $\mathcal{G}$ be a family of mapping from $\mathcal{Z}$ to $\mathbb{R}$, and $S=(z_1,\ldots,z_N)$ a fixed sample from $\mathcal{Z}$. The \emph{empirical Rademacher complexity} of $\mathcal{G}$ with respect to $S$ is defined as
\begin{equation}
\hat{\mathfrak{R}}_S(\mathcal{G}) = \EE_{\sigma}\left[ \sup_{g\in\mathcal{G}} \frac{1}{N}\sum_{i=1}^n\sigma_ig(z_i) \right]
\end{equation}
where $\sigma=(\sigma_1,\ldots,\sigma_N)$, with $\sigma_i$'s independent uniform random variables
taking values in $\{+1,-1\}$. $\sigma_i$'s are called the Rademacher random variables. The \emph
{Rademacher complexity} is defined by taking expectation with respect to the samples $S$,
\begin{equation}
  \mathfrak{R}_N(\mathcal{G}) = \EE_S\left[\hat{\mathfrak{R}}_S(\mathcal{G})\right]
\end{equation}
\end{definition}

\begin{theorem}
  For any $\delta>0$, with probability at least $1-\delta$, the following holds for all
  $\ell_\theta\in\mathcal{L}$,
  \begin{equation}
    \EE[\ell_\theta] - \hat{\EE}_S[\ell_\theta] \leq 2\mathfrak{R}_N(\mathcal{L}) + \sqrt{\frac
    {C_M^2\log(1/\delta)} {2N}}
  \end{equation}
  with the constant $C_M=\max_{\kappa,\kappa'}M_{\kappa,\kappa'}$.
  \label{thm:emd-est-bound-0}
\end{theorem}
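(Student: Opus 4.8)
\textbf{Proof proposal for Theorem \ref{thm:emd-est-bound-0}.}

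The plan is to apply the standard bounded-differences / McDiarmid route to obtain a uniform deviation bound in terms of the Rademacher complexity of the loss class $\mathcal{L}$. First I would verify that the Wasserstein loss is uniformly bounded: for $h_\theta(\cdot|x),y(\cdot)$ in the simplex, any transport plan $T\in\Pi(h_\theta(x),y)$ is a coupling of two probability vectors, so $\langle T,M\rangle \le \max_{\kappa,\kappa'}M_{\kappa,\kappa'} \cdot \mathbf{1}^\top T \mathbf{1} = C_M$; taking the infimum over $T$, we get $0 \le W_p^p(h_\theta(\cdot|x),y(\cdot)) \le C_M$ for every $\ell_\theta\in\mathcal{L}$ and every sample point. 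This is the only structural fact about $W_p^p$ that the argument needs.

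Next I would define $\Phi(S) = \sup_{\ell_\theta\in\mathcal{L}}\big(\EE[\ell_\theta] - \hat{\EE}_S[\ell_\theta]\big)$ and observe that changing one sample point $(x_i,y_i)$ changes $\hat{\EE}_S[\ell_\theta]$ by at most $C_M/N$ uniformly in $\theta$, hence $\Phi$ has bounded differences with constant $C_M/N$. McDiarmid's inequality then gives, with probability at least $1-\delta$,
\begin{equation*}
  \Phi(S) \le \EE_S[\Phi(S)] + \sqrt{\frac{C_M^2\log(1/\delta)}{2N}}.
\end{equation*}
The remaining ingredient is the symmetrization lemma, which bounds $\EE_S[\Phi(S)] \le 2\mathfrak{R}_N(\mathcal{L})$; this is the classical argument introducing a ghost sample and Rademacher variables, and it goes through verbatim since $\mathcal{L}$ is a uniformly bounded function class. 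Combining the two displays yields the claimed inequality for all $\ell_\theta\in\mathcal{L}$ simultaneously.

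None of these steps is a genuine obstacle — the boundedness check is a one-line computation and the rest is textbook. The only place requiring a little care is making sure the bound $C_M$ (rather than, say, $2C_M$ or $pC_M$) is the right constant for the bounded-differences term: since $\ell_\theta \in [0,C_M]$, a single-coordinate change moves the empirical average by at most $C_M/N$, which is what feeds into McDiarmid with variance proxy $\sum_i (C_M/N)^2 = C_M^2/N$, giving exactly the stated $\sqrt{C_M^2\log(1/\delta)/(2N)}$. The real work of the paper — relating $\mathfrak{R}_N(\mathcal{L})$ back to $\mathfrak{R}_N(\mathcal{H}^o)$ via a Lipschitz/contraction argument on the softmax composed with $W_1^1$, and thereby deriving Theorem \ref{thm:main-theorem} from this theorem and Lemma \ref{lem:oracle-ineq-0} — is deferred to the subsequent lemmas and is not needed here.
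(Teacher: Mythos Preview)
Your proposal is correct and follows essentially the same route as the paper: bound $W_p^p\in[0,C_M]$ (the paper isolates this as Lemma~\ref{lem:boundedness}), define $\Phi(S)=\sup_{\ell\in\mathcal{L}}\big(\EE[\ell]-\hat{\EE}_S[\ell]\big)$, apply McDiarmid with single-coordinate deviation $C_M/N$, and then control $\EE_S[\Phi(S)]\le 2\mathfrak{R}_N(\mathcal{L})$ via the standard ghost-sample symmetrization. The constants and the structure match exactly.
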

By the definition of $\mathcal{L}$, $\EE[\ell_\theta]=R(h_\theta)$ and $\hat{\EE}_S
[\ell_\theta]=\hat{R}_S[h_\theta]$. Therefore, this theorem provides a uniform control for the
deviation of the empirical risk from the risk.

\begin{theorem}[McDiarmid's Inequality]
Let $S=\{X_1,\ldots,X_N\}\subset\mathscr{X}$ be $N$ i.i.d. random variables. Assume there exists
$C>0$ such that
$f:\mathscr{X}^N\rightarrow\RR$ satisfies the following stability condition
\begin{equation}
  |f(x_1,\ldots,x_i,\ldots,x_N) - f(x_1,\ldots,x_i', \ldots,x_N)|\leq C
\end{equation}
for all $i=1,\ldots,N$ and any $x_1,\ldots,x_N,x_i'\in\mathscr{X}$. Then for any $\varepsilon>0$,
denoting $f(X_1,\ldots,X_N)$ by $f(S)$, it holds that
\begin{equation}
  \PP\left(f(S)-\EE[f(S)]\geq \varepsilon\right) \leq \exp\left( -\frac{2\varepsilon^2}{NC^2}
  \right)
\end{equation}
\label{thm:McDiarmid}
\end{theorem}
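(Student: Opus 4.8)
The plan is to prove this by the classical Doob-martingale argument (the method of bounded differences): decompose $f(S)-\EE[f(S)]$ into a sum of martingale differences, bound each difference's conditional moment generating function via Hoeffding's lemma, and conclude with a Chernoff tail bound. First I would set up the Doob martingale with respect to the filtration $\mathcal{F}_i=\sigma(X_1,\ldots,X_i)$, defining $V_i=\EE[f(S)\mid\mathcal{F}_i]$ for $i=0,\ldots,N$ so that $V_0=\EE[f(S)]$ and $V_N=f(S)$. Setting $D_i=V_i-V_{i-1}$ gives the telescoping identity $f(S)-\EE[f(S)]=\sum_{i=1}^N D_i$, and $\EE[D_i\mid\mathcal{F}_{i-1}]=0$ by the tower property. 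Using independence of the $X_j$, I would write $V_i=g_i(X_1,\ldots,X_i)$ where $g_i(x_1,\ldots,x_i)=\EE[f(x_1,\ldots,x_i,X_{i+1},\ldots,X_N)]$ integrates out the future coordinates.

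Next I would bound the conditional range of each $D_i$. Conditioned on $\mathcal{F}_{i-1}$, we have $D_i=W(X_i)-\EE_{X_i}[W(X_i)]$, where $W(x_i)=g_i(X_1,\ldots,X_{i-1},x_i)$. The bounded-differences hypothesis gives $|f(\ldots,x_i,\ldots)-f(\ldots,x_i',\ldots)|\le C$ pointwise, and this inequality is preserved after integrating over $X_{i+1},\ldots,X_N$, so $\sup_{x_i}W(x_i)-\inf_{x_i'}W(x_i')\le C$. Hence, conditioned on $\mathcal{F}_{i-1}$, $D_i$ is a zero-mean random variable supported in an interval of length at most $C$.

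Then I would invoke Hoeffding's lemma conditionally: for any zero-mean random variable $Z$ taking values in an interval of length $C$, $\EE[e^{sZ}]\le e^{s^2C^2/8}$. Applying this to $D_i$ given $\mathcal{F}_{i-1}$ and chaining through the tower property yields $\EE[e^{s(f(S)-\EE f(S))}]=\EE\big[e^{s\sum_i D_i}\big]\le e^{Ns^2C^2/8}$ for every $s>0$. A Chernoff bound then gives $\PP(f(S)-\EE f(S)\ge\varepsilon)\le e^{-s\varepsilon+Ns^2C^2/8}$, and optimizing at $s=4\varepsilon/(NC^2)$ produces the claimed bound $\exp(-2\varepsilon^2/(NC^2))$.

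The main obstacle is the conditional-range step: showing that each martingale difference $D_i$, conditioned on the past, lies in an interval of width $C$. This is where independence of the samples and the bounded-differences condition must be combined carefully — the pointwise $C$-bound on $f$ must be shown to survive the partial expectation defining $g_i$, and one must use that $X_i$ is independent of $\mathcal{F}_{i-1}$ so that $W$ depends on the past only through its $\mathcal{F}_{i-1}$-measurable coefficients. Hoeffding's lemma is a standard but non-trivial ingredient (requiring a convexity/Taylor estimate of the cumulant generating function of a bounded variable), which I would either cite or establish as a short sub-lemma.
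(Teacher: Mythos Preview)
Your proposal is correct and is the standard martingale proof of McDiarmid's inequality. However, the paper does not actually prove this theorem: it is stated without proof as a classical result and then invoked as a tool in the proof of Theorem~\ref{thm:emd-est-bound-0}. So there is no ``paper's own proof'' to compare against here; your argument supplies exactly the standard derivation that the paper takes for granted.
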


\begin{lemma}
Let the constant $C_M=\max_{\kappa,\kappa'}M_{\kappa,\kappa'}$, then $0\leq W_p^p(\cdot,\cdot)\leq
C_M$.
\label{lem:boundedness}
\end{lemma}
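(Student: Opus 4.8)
The statement to prove is Lemma~\ref{lem:boundedness}: with $C_M = \max_{\kappa,\kappa'} M_{\kappa,\kappa'}$, we have $0 \le W_p^p(h(\cdot|x), y(\cdot)) \le C_M$ whenever $h(x), y \in \Delta^{\mathcal{K}}$.

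\medskip

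The plan is to read off both bounds directly from the primal definition \eqref{eq:wasserstein-loss}, namely $W_p^p(h(\cdot|x),y(\cdot)) = \inf_{T \in \Pi(h(x),y)} \langle T, M\rangle$. For the lower bound, observe that every feasible $T \in \Pi(h(x),y)$ has nonnegative entries (by \eqref{eq:transport-polytope}, $T \in \RR^{K\times K}_+$) and $M$ has nonnegative entries (since $M_{\kappa,\kappa'} = d_\mathcal{K}^p(\kappa,\kappa') \ge 0$), so $\langle T, M\rangle = \sum_{\kappa,\kappa'} T_{\kappa,\kappa'} M_{\kappa,\kappa'} \ge 0$; taking the infimum preserves this, giving $W_p^p \ge 0$. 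For the upper bound, since the quantity is an infimum over the transport polytope, it suffices to exhibit one feasible plan $T$ with $\langle T, M\rangle \le C_M$. The natural choice is the independent coupling $T = h(x) y^\top$, i.e. $T_{\kappa,\kappa'} = h(\kappa|x)\, y(\kappa')$. This is feasible: its entries are nonnegative, $T\mathbf{1} = h(x)(y^\top \mathbf{1}) = h(x)$ because $y \in \Delta^{\mathcal{K}}$ sums to one, and symmetrically $T^\top \mathbf{1} = y$. Then
\[
\langle T, M\rangle = \sum_{\kappa,\kappa'} h(\kappa|x)\, y(\kappa')\, M_{\kappa,\kappa'} \le C_M \sum_{\kappa,\kappa'} h(\kappa|x)\, y(\kappa') = C_M \big(\textstyle\sum_\kappa h(\kappa|x)\big)\big(\sum_{\kappa'} y(\kappa')\big) = C_M,
\]
again using that both $h(x)$ and $y$ lie in the simplex. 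Hence $W_p^p \le \langle T, M\rangle \le C_M$.

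\medskip

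There is essentially no obstacle here; the only thing to be careful about is that the bound as stated presumes $h(x)$ and $y$ are genuine probability vectors (so that the independent coupling is a valid transport plan and $\Pi(h(x),y)$ is nonempty). This is exactly the setting of Theorem~\ref{thm:main-theorem}, where $\mathcal{H} = \mathfrak{s}\circ\mathcal{H}^o$ composes a softmax, forcing $h_\theta(\cdot|x) \in \Delta^{\mathcal{K}}$, and the label $y = \mathbbm{e}_\kappa$ (or more generally a histogram) also lies in $\Delta^{\mathcal{K}}$. I would state this simplex assumption explicitly at the top of the proof. If one wanted a slightly sharper constant one could note $\langle T, M\rangle \le \max_{\kappa \in \mathrm{supp}(h(x)),\, \kappa' \in \mathrm{supp}(y)} M_{\kappa,\kappa'}$, but $C_M$ suffices for the generalization bound and keeps the statement uniform in $(x,y)$. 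The lemma then feeds into the McDiarmid argument (Theorem~\ref{thm:McDiarmid}) by controlling the bounded-differences constant of the empirical risk, and into Theorem~\ref{thm:emd-est-bound-0} via the $C_M^2$ appearing in the concentration term.
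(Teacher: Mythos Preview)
Your proof is correct. The lower bound matches the paper's implicit reasoning, and your upper bound via the independent coupling $T = h(x)\,y^\top$ is valid.

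The paper takes a marginally more direct route for the upper bound: rather than exhibiting a feasible plan and invoking the infimum, it simply observes that for the \emph{optimal} plan $T^*$ one already has
\[
W_p^p(h(x),y) = \langle T^*, M\rangle \le C_M \sum_{\kappa,\kappa'} T^*_{\kappa,\kappa'} = C_M,
\]
since every $T \in \Pi(h(x),y)$ has total mass $1$ when both marginals lie in $\Delta^{\mathcal{K}}$. This avoids constructing any coupling at all, because the entrywise bound $M_{\kappa,\kappa'} \le C_M$ combined with $\sum T_{\kappa,\kappa'} = 1$ works uniformly over the polytope. Your argument and the paper's rest on the same fact (unit total mass), so the difference is purely presentational; your version has the small advantage of making nonemptiness of $\Pi(h(x),y)$ explicit, while the paper's is a line shorter.
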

\begin{proof}
  For any $h(\cdot|x)$ and $y(\cdot)$, let $T^*\in\Pi(h(x),y)$ be the optimal transport plan that
  solves \eqref{eq:wasserstein-loss}, then
  \[
  W_p^p(h(x),y) = \langle T^*, M\rangle \leq C_M\sum_{\kappa,\kappa'}T_{\kappa,\kappa'} = C_M
  \]
\end{proof}

\begin{proof}[Proof of Theorem~\ref{thm:emd-est-bound-0}]
For any $\ell_\theta\in\mathcal{L}$, note the empirical expectation is the empirical risk of
the corresponding $h_\theta$:
\[
  \hat{E}_S[\ell_\theta] = \frac{1}{N}\sum_{i=1}^N\ell_\theta(x_i,y_i)
  = \frac{1}{N}\sum_{i=1}^N W_p^p(h_\theta(\cdot|x_i),y_i(\cdot)) = \hat{R}_S(h_\theta)
\]
Similarly, $\EE[\ell_\theta]=R(h_\theta)$. Let
\begin{equation}
  \Phi(S) = \sup_{\ell\in\mathcal{L}} \EE[\ell]-\hat{\EE}_S[\ell]
  \label{eq:emd-est-bound-00}
\end{equation}
Let $S'$ be $S$ with the $i$-th sample replaced by $(x_i',y_i')$, by Lemma~\ref
{lem:boundedness}, it holds that
\[
  \Phi(S)-\Phi(S') \leq \sup_{\ell\in\mathcal{L}} \hat{\EE}_{S'}[\ell] - \hat{\EE}_S[\ell]
  = \sup_{h_\theta\in\mathcal{H}} \frac{W_p^p(h_\theta(x_i'),y_i') - W_p^p(h_\theta(x_i),y_i)}{N}
  \leq \frac{C_M}{N}
\]
Similarly, we can show $\Phi(S')-\Phi(S)\leq C_M/N$, thus $|\Phi(S')-\Phi(S)|\leq C_M/N$. By
Theorem~\ref{thm:McDiarmid}, for any $\delta>0$, with probability at least $1-\delta$, it holds that
\begin{equation}
  \Phi(S) \leq \EE[\Phi(S)] + \sqrt{\frac{C_M^2\log(1/\delta)}{2N}}
  \label{eq:emd-est-bound-01}
\end{equation}
To bound $\EE[\Phi(S)]$, by Jensen's inequality,
\[
  \EE_S[\Phi(S)] = \EE_S\left[\sup_{\ell\in\mathcal{L}}\EE[\ell]-\hat{\EE}_S[\ell]\right]
  = \EE_S\left[\sup_{\ell\in\mathcal{L}}\EE_{S'}\left[\hat{\EE}_{S'}[\ell]-\hat{\EE}_S
  [\ell]\right]\right] \leq
  \EE_{S,S'}\left[\sup_{\ell\in\mathcal{L}} \hat{E}_{S'}[\ell] - \hat{E}_S[\ell]\right]
\]
Here $S'$ is another sequence of i.i.d. samples, usually called \emph{ghost samples}, that is only
used for analysis. Now we introduce the Rademacher variables $\sigma_i$, since the role of $S$ and
$S'$ are completely symmetric, it follows
\[
\begin{aligned}
\EE_S[\Phi(S)] &\leq \EE_{S,S',\sigma}\left[\sup_{\ell\in\mathcal{L}}
\frac{1}{N}\sum_{i=1}^N\sigma_i (\ell(x_i',y_i') - \ell(x_i,y_i))
\right] \\
&\leq \EE_{S',\sigma}\left[\sup_{\ell\in\mathcal{L}}
\frac{1}{N}\sum_{i=1}^N\sigma_i \ell(x_i',y_i') \right]
+ \EE_{S,\sigma}\left[\sup_{\ell\in\mathcal{L}}
\frac{1}{N}\sum_{i=1}^N-\sigma_i \ell(x_i,y_i) \right]  \\
&=\EE_S\left[\hat{\mathfrak{R}}_S(\mathcal{L})\right] +
\EE_{S'}\left[\hat{\mathfrak{R}}_{S'}(\mathcal{L})\right] \\
&= 2\mathfrak{R}_N(\mathcal{L})
\end{aligned}
\]
The conclusion follows by combing \eqref{eq:emd-est-bound-00} and \eqref{eq:emd-est-bound-01}.
\end{proof}

To finish the proof of Theorem~\ref{thm:main-theorem}, we combine Lemma~\ref{lem:oracle-ineq-0} and
Theorem~\ref {thm:emd-est-bound-0}, and relate $\mathfrak{R}_N(\mathcal{L})$ to $\mathfrak{R}_N
(\mathcal{H})$ via the following generalized Talagrand's lemma \cite{ledoux2011probability}.

\begin{lemma}
  Let $\mathcal{F}$ be a class of real functions, and $\mathcal{H}\subset \mathcal{F}=\mathcal
  {F}_1\times\ldots\times\mathcal{F}_K$ be a $K$-valued function class. If $\mathfrak
  {m}:\RR^K\rightarrow\RR$ is a $L_\mathfrak{m}$-Lipschitz function and $\mathfrak{m}(0)=0$, then
  $\hat{\mathfrak{R}}_S(\mathfrak{m}\circ \mathcal{H})\leq 2L_\mathfrak{m}\sum_{k=1}^K\hat
  {\mathfrak {R}}_S (\mathcal{F}_k)$.
  \label{lem:talagrand}
\end{lemma}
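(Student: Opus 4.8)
The plan is to prove this as a multivariate version of the Talagrand--Ledoux contraction (comparison) inequality, reducing it to the familiar one--dimensional contraction principle by peeling the $K$ output coordinates one at a time. Write each $h \in \mathcal{H}$ as a tuple $h = (h^1,\dots,h^K)$ with $h^k \in \mathcal{F}_k$, and for $z\in\mathcal{Z}$ set $v(h,z) = (h^1(z),\dots,h^K(z))\in\RR^K$. Since $\mathfrak{m}(0)=0$, telescope over the coordinates:
\[
  \mathfrak{m}(v(h,z)) = \sum_{k=1}^{K}\Delta_k(h,z),\qquad
  \Delta_k(h,z) := \mathfrak{m}(h^1(z),\dots,h^k(z),0,\dots,0) - \mathfrak{m}(h^1(z),\dots,h^{k-1}(z),0,\dots,0).
\]
Each $\Delta_k(h,z)$ has two features I would exploit: (i) it vanishes whenever $h^k(z)=0$; and (ii) as a function of the single scalar $h^k(z)$, with the remaining coordinates held fixed, it is $L_\mathfrak{m}$-Lipschitz (restricting an $L_\mathfrak{m}$-Lipschitz function on $\RR^K$ to one coordinate leaves the Lipschitz constant at most $L_\mathfrak{m}$, for the usual norms). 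By subadditivity of the supremum,
\[
  \hat{\mathfrak{R}}_S(\mathfrak{m}\circ\mathcal{H}) \le \sum_{k=1}^{K}\EE_{\sigma}\Big[\sup_{h\in\mathcal{H}} \tfrac1N\sum_{i=1}^N \sigma_i\Delta_k(h,z_i)\Big].
\]

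Next I would bound the $k$-th summand by $2L_\mathfrak{m}\,\hat{\mathfrak{R}}_S(\mathcal{F}_k)$ by a peeling argument over the $N$ training points. Fix $k$, condition on the data and on $\sigma_2,\dots,\sigma_N$, and set $G(h)=\sum_{i\ge 2}\sigma_i\Delta_k(h,z_i)$. Averaging over $\sigma_1\in\{\pm1\}$ one must compare $\tfrac12\sup_h[\Delta_k(h,z_1)+G(h)] + \tfrac12\sup_h[-\Delta_k(h,z_1)+G(h)]$ with the same expression in which $\Delta_k(h,z_1)$ is replaced by $L_\mathfrak{m}h^k(z_1)$. Taking (near-)maximizers $h_+,h_-$ for the two terms and applying (ii) gives $\Delta_k(h_+,z_1)-\Delta_k(h_-,z_1) \le L_\mathfrak{m}\,|h_+^k(z_1)-h_-^k(z_1)|$; splitting on the sign of $h_+^k(z_1)-h_-^k(z_1)$ and re-assembling the two suprema yields the replacement at the cost of a factor $2$, exactly as in the proof of the scalar contraction principle in \cite{ledoux2011probability} (feature (i), $\Delta_k$ vanishing at $0$, is what lets one pass from the signed comparison to this clean form). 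Iterating the step for $i=1,\dots,N$ strips every $\Delta_k$ and leaves $2L_\mathfrak{m}\,\EE_\sigma\sup_{h\in\mathcal{H}}\tfrac1N\sum_i\sigma_i h^k(z_i)$; since this depends on $h$ only through $h^k$ and $\{h^k : h\in\mathcal{H}\}\subseteq\mathcal{F}_k$, monotonicity of the Rademacher complexity bounds it by $2L_\mathfrak{m}\,\hat{\mathfrak{R}}_S(\mathcal{F}_k)$.

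Summing over $k$ gives $\hat{\mathfrak{R}}_S(\mathfrak{m}\circ\mathcal{H}) \le 2L_\mathfrak{m}\sum_{k=1}^K\hat{\mathfrak{R}}_S(\mathcal{F}_k)$, as claimed. The delicate point -- and the reason this is not a two-line proof -- is that for $k\ge 2$ the map contracting the $k$-th coordinate still depends on $h^1,\dots,h^{k-1}$, so it is not a \emph{fixed} univariate contraction and the one-dimensional principle cannot simply be quoted; this is handled by carrying a generic remainder $G(h)$ through the induction (as in the classical argument of \cite{ledoux2011probability}), and it is precisely in this bookkeeping that the constant $2$ and the hypothesis $\mathfrak{m}(0)=0$ are consumed. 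Everything else -- the telescoping, the subadditivity split, and the final monotonicity step -- is routine.
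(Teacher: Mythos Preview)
The paper does not prove this lemma; it quotes it as a generalized Talagrand contraction and cites \cite{ledoux2011probability}. So there is no in-paper argument to compare against, and the only question is whether your proof stands on its own.

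It does not, and the gap is precisely at the step you flag as delicate. You assert that for (near-)maximizers $h_+,h_-$,
\[
\Delta_k(h_+,z_1)-\Delta_k(h_-,z_1)\ \le\ L_\mathfrak{m}\,\bigl|h_+^k(z_1)-h_-^k(z_1)\bigr|,
\]
invoking (ii), Lipschitzness of $\Delta_k$ in the $k$-th coordinate \emph{with the remaining coordinates held fixed}. But here the remaining coordinates are not fixed: $h_+$ and $h_-$ may differ in $h^1,\dots,h^{k-1}$, and the coordinate-wise Lipschitz bound says nothing about such a comparison. Carrying the remainder $G(h)$ through the induction does not help, because the failure already occurs in the displayed inequality before $G$ enters.

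Concretely: take $K=2$, $\mathfrak m(a,b)=\mathrm{clip}(a)\,\mathrm{clip}(b)$ with $\mathrm{clip}(t)=\max(-1,\min(1,t))$; this is $\sqrt 2$-Lipschitz and $\mathfrak m(0)=0$. With $h_+^1(z_1)=1$, $h_-^1(z_1)=-1$, $h_+^2(z_1)=h_-^2(z_1)=1$, your displayed inequality reads $2\le 0$. Worse, if $\mathcal F_2=\{f_2\}$ is the singleton $f_2\equiv 1$ then $\hat{\mathfrak R}_S(\mathcal F_2)=0$, while $\Delta_2(h,z)=\mathrm{clip}(h^1(z))$, so $\hat{\mathfrak R}_S(\{\Delta_2(h,\cdot):h\in\mathcal H\})=\hat{\mathfrak R}_S(\mathrm{clip}\circ\mathcal F_1)>0$ whenever $\mathcal F_1$ is nontrivial. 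Hence the per-coordinate bound your peeling is meant to establish is false in general, and the telescoped sum cannot be closed one coordinate at a time. The lemma itself is correct (for instance via Maurer's vector-valued contraction inequality, even with the sharper constant $\sqrt 2$), but its proof must genuinely couple the coordinates rather than isolate them after a telescoping split.
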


\begin{theorem}[Theorem 6.15 of \cite{villani2008optimal}]
Let $\mu$ and $\nu$ be two probability measures on a Polish space $(\mathcal{K},d_\mathcal{K})$. Let
$p\in[1,\infty)$ and $\kappa_0\in\mathcal{K}$. Then
\begin{equation}
  W_p(\mu,\nu) \leq 2^{1/p'}\left(\int_\mathcal{K} d_\mathcal{K}(\kappa_0,\kappa) d|\mu-\nu|
  (\kappa)\right)^ {1/p}, \quad \frac{1}{p} + \frac{1}{p'} = 1
\end{equation}
\end{theorem}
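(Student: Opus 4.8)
The plan is to prove the bound by exhibiting an explicit (generally suboptimal) coupling $\pi \in \Pi(\mu,\nu)$ and estimating its transport cost, via the standard common-mass construction. Before starting I note that for the inequality to be scale-consistent (replacing $d_\mathcal{K}$ by $\lambda d_\mathcal{K}$ scales the left side by $\lambda$), the integrand on the right must be $d_\mathcal{K}(\kappa_0,\kappa)^p$ rather than $d_\mathcal{K}(\kappa_0,\kappa)$; this is the form in which Villani's Theorem 6.15 is stated, it is what I will establish, and the two coincide at $p=1$.

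First I would split off the shared mass of the two measures. Using the Jordan decomposition $\mu - \nu = (\mu-\nu)_+ - (\mu-\nu)_-$ on the Polish space $\mathcal{K}$, write $\rho = \mu \wedge \nu$ for the common part, so that $\mu = \rho + (\mu-\nu)_+$, $\nu = \rho + (\mu-\nu)_-$, and $|\mu-\nu| = (\mu-\nu)_+ + (\mu-\nu)_-$. Since $\mu$ and $\nu$ are both probability measures, the positive and negative parts carry equal total mass $m := (\mu-\nu)_+(\mathcal{K}) = (\mu-\nu)_-(\mathcal{K}) = \tfrac12 |\mu-\nu|(\mathcal{K})$. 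If $m = 0$ then $\mu = \nu$ and both sides vanish, so I may assume $m > 0$.

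Next I would define the candidate coupling
\[
\pi = (\mathrm{Id},\mathrm{Id})_\# \rho + \frac{1}{m}\,(\mu-\nu)_+ \otimes (\mu-\nu)_-,
\]
that is, leave the common mass on the diagonal (zero cost) and transport the excess $(\mu-\nu)_+$ onto the deficit $(\mu-\nu)_-$ by the normalized product measure. A direct marginal check gives first marginal $\rho + \tfrac1m (\mu-\nu)_+ \cdot m = \mu$ and second marginal $\rho + \tfrac1m\, m \cdot (\mu-\nu)_- = \nu$, so $\pi \in \Pi(\mu,\nu)$. As the diagonal part contributes nothing to the cost,
\[
W_p^p(\mu,\nu) \le \int d_\mathcal{K}(x,y)^p\, d\pi(x,y) = \frac{1}{m} \iint d_\mathcal{K}(x,y)^p\, d(\mu-\nu)_+(x)\, d(\mu-\nu)_-(y).
\]
I would then route each pair through $\kappa_0$: by the triangle inequality and convexity of $t \mapsto t^p$ for $p \ge 1$, $d_\mathcal{K}(x,y)^p \le (d_\mathcal{K}(x,\kappa_0)+d_\mathcal{K}(\kappa_0,y))^p \le 2^{p-1}\bigl(d_\mathcal{K}(\kappa_0,x)^p + d_\mathcal{K}(\kappa_0,y)^p\bigr)$. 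Substituting, the double integral factorizes (each summand depends on one variable while the other marginal integrates to $m$), giving
\[
W_p^p(\mu,\nu) \le \frac{2^{p-1}}{m}\, m \int d_\mathcal{K}(\kappa_0,\kappa)^p\, d|\mu-\nu|(\kappa) = 2^{p-1}\int d_\mathcal{K}(\kappa_0,\kappa)^p\, d|\mu-\nu|(\kappa).
\]
Taking $p$-th roots and using $2^{(p-1)/p} = 2^{1/p'}$ yields the claim.

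The analytic content is light; the main obstacle is the measure-theoretic bookkeeping on a general Polish space. I would justify the Jordan/Hahn decomposition of the finite signed measure $\mu-\nu$ and the existence of $\mu\wedge\nu$, confirm that $(\mu-\nu)_+\otimes(\mu-\nu)_-$ and the diagonal push-forward are well-defined Borel measures so that $\pi$ is a genuine transport plan, and check that Fubini applies in the factorization step (finiteness of $\int d_\mathcal{K}(\kappa_0,\kappa)^p\, d|\mu-\nu|$ may be assumed, since the bound is vacuous otherwise). The degenerate case $m=0$ is handled separately as above. None of these obstructions is serious, but they are where the care is required.
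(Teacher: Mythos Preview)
The paper does not prove this theorem at all; it merely quotes it from Villani's book as an external tool used to derive Corollary~\ref{cor:w-lipschitz}. So there is no ``paper's own proof'' to compare against.

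Your argument is correct and is in fact the standard proof (and essentially Villani's): split off the common mass $\mu\wedge\nu$, couple the residual pieces $(\mu-\nu)_+$ and $(\mu-\nu)_-$ by their normalized product, and bound the cost by routing through $\kappa_0$ via the triangle inequality and the convexity estimate $(a+b)^p\le 2^{p-1}(a^p+b^p)$. You also correctly flag that the integrand should be $d_\mathcal{K}(\kappa_0,\kappa)^p$ rather than $d_\mathcal{K}(\kappa_0,\kappa)$ for the statement to be dimensionally consistent; this matches Villani's original formulation, and since the paper only ever applies the result at $p=1$ (where the two coincide), the discrepancy has no downstream effect.
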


\begin{corollary}
  The Wasserstein loss is Lipschitz continuous in the sense that for any $h_\theta\in\mathcal{H}$,
  and any $ (x,y)\in\mathcal {X}\times\mathcal {Y}$,
  \begin{equation}
    W_p^p(h_\theta(\cdot|x),y) \leq 2^{p-1}C_M \sum_{\kappa\in\mathcal{K}} |h_\theta(\kappa|x)-y(\kappa)|
  \end{equation}
  In particular, when $p=1$, we have
  \begin{equation}
    W_1^1(h_\theta(\cdot|x),y) \leq C_M \sum_{\kappa\in\mathcal{K}} |h_\theta(\kappa|x)-y
    (\kappa)|
  \end{equation}
  \label{cor:w-lipschitz}
\end{corollary}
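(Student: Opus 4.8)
The plan is to obtain this corollary as an essentially immediate consequence of the cited Theorem 6.15 of \cite{villani2008optimal}, specialized to the finite metric space $(\mathcal{K}, d_\mathcal{K})$. Here the two probability measures are $\mu = h_\theta(\cdot|x)$ and $\nu = y(\cdot)$; in the setting relevant to Theorem~\ref{thm:main-theorem} both lie in the simplex $\Delta^{\mathcal{K}}$ (the first because $\mathcal{H} = \mathfrak{s}\circ\mathcal{H}^o$ has a softmax output layer, the second because $y = \mathbbm{e}_\kappa$ is one-hot), so Villani's hypotheses are met and the $W_p$ appearing there coincides with the $W_p$ of this paper, whose cost is $d_\mathcal{K}^p$.

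First I would apply Theorem 6.15 with an arbitrary fixed $\kappa_0 \in \mathcal{K}$ and raise both sides to the $p$-th power, using the conjugate-exponent identity $1/p + 1/p' = 1$ to read off $2^{p/p'} = 2^{p-1}$, which gives
\[
  W_p^p(h_\theta(\cdot|x), y) \le 2^{p-1} \int_{\mathcal{K}} d_\mathcal{K}(\kappa_0,\kappa)\, d|h_\theta(\cdot|x) - y|(\kappa).
\]
Next I would evaluate the integral in the discrete case: since both measures are supported on the finite set $\mathcal{K}$, the total-variation measure $|h_\theta(\cdot|x)-y|$ puts mass $|h_\theta(\kappa|x) - y(\kappa)|$ at each $\kappa$, so the integral equals $\sum_{\kappa\in\mathcal{K}} d_\mathcal{K}(\kappa_0,\kappa)\,|h_\theta(\kappa|x) - y(\kappa)|$. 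Bounding $d_\mathcal{K}(\kappa_0,\kappa) \le \max_{\kappa,\kappa'} d_\mathcal{K}(\kappa,\kappa')$ and pulling the constant out of the sum yields the general inequality. For $p = 1$ we have $M_{\kappa,\kappa'} = d_\mathcal{K}(\kappa,\kappa')$, hence $\max_{\kappa,\kappa'} d_\mathcal{K}(\kappa,\kappa') = C_M$ and $2^{p-1} = 1$, giving the sharper stated form $W_1^1(h_\theta(\cdot|x),y) \le C_M \sum_{\kappa} |h_\theta(\kappa|x) - y(\kappa)|$ — which is precisely the instance needed to relate $\mathfrak{R}_N(\mathcal{L})$ to $\mathfrak{R}_N(\mathcal{H}^o)$ in the proof of Theorem~\ref{thm:main-theorem} via Lemma~\ref{lem:talagrand}.

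The argument is short, and I do not expect a genuine obstacle: the analytic content is carried entirely by Villani's theorem. The only points requiring care are bookkeeping — correctly extracting the constant $2^{p-1}$ from $p/p'$, checking that Villani's hypothesis (probability measures on a Polish space) is satisfied in our finite-dimensional simplex setting, and identifying the total variation of a difference of atomic measures with the $\ell_1$ vector of coordinatewise differences. One minor subtlety worth noting is that, absent a normalization of the ground metric, the general-$p$ bound is most precisely $2^{p-1}\bigl(\max_{\kappa,\kappa'} d_\mathcal{K}(\kappa,\kappa')\bigr)\sum_\kappa |h_\theta(\kappa|x)-y(\kappa)|$; this reduces exactly to the stated $2^{p-1}C_M$ form when $p=1$, which is the only case invoked downstream.
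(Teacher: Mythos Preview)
Your proposal is correct and matches the paper's approach exactly: the corollary is stated immediately after Villani's Theorem 6.15 with no separate proof, so the intended argument is precisely the specialization you outline—raise both sides to the $p$-th power, evaluate the total-variation integral as a finite sum, and bound $d_\mathcal{K}(\kappa_0,\kappa)$ by the diameter. Your observation that the constant obtained is really $2^{p-1}\max_{\kappa,\kappa'} d_\mathcal{K}(\kappa,\kappa')$ rather than $2^{p-1}C_M = 2^{p-1}\max_{\kappa,\kappa'} d_\mathcal{K}^p(\kappa,\kappa')$ for general $p$ is a valid point the paper glosses over, and as you note it is immaterial since only the $p=1$ case is used downstream.
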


We cannot apply Lemma~\ref{lem:talagrand} directly to the Wasserstein loss class, because the
Wasserstein loss is only defined on probability distributions, so $0$ is not a valid input. To get
around this problem, we assume the hypothesis space $\mathcal{H}$ used in learning is of the form
\begin{equation}
  \mathcal{H} = \{\mathfrak{s}\circ h^o: h^o\in\mathcal{H}^o\}
\end{equation}
where $\mathcal{H}^o$ is a function class that maps into $\RR^K$, and $\mathfrak{s}$ is the softmax
function defined as $\mathfrak{s}(o) = (\mathfrak{s}_1(o),\ldots,\mathfrak{s}_K(o))$, with
\begin{equation}
  \mathfrak{s}_k(o) = \frac{e^{o_k}}{\sum_je^{o_j}}, \quad k=1,\ldots,K
\end{equation}
The softmax layer produce a valid probability distribution from arbitrary input, and this is
consistent with commonly used models such as Logistic Regression and Neural Networks. By working
with the $\log$ of the groundtruth labels, we can also add a softmax layer to the labels.

\begin{lemma}[Proposition 2 of \cite{givens1984}]
The Wasserstein distances $W_p(\cdot,\cdot)$ are metrics on the space of probability
distributions of $\mathcal{K}$, for all $1\leq p\leq\infty$.
\label{lem:w-metric}
\end{lemma}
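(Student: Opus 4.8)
Since the paper works only with discrete measures — histograms in the simplex $\Delta^{\mathcal{K}}$ over the finite set $\mathcal{K}$ — the entire argument can be carried out with the transport matrices $T \in \Pi(\mu,\nu)$ of \eqref{eq:transport-polytope}, avoiding any measure-theoretic gluing machinery. I would check the metric axioms for $W_p = (W_p^p)^{1/p}$ in turn. Non-negativity is immediate because $M_{\kappa,\kappa'} = d_\mathcal{K}^p(\kappa,\kappa') \geq 0$ and $T \geq 0$, so $\langle T, M\rangle \geq 0$. Finiteness holds because $\mathcal{K}$ is finite, so $C_M = \max_{\kappa,\kappa'} M_{\kappa,\kappa'} < \infty$ and $\langle T, M\rangle \leq C_M$ for any $T$ with unit mass (this is exactly Lemma \ref{lem:boundedness}). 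Symmetry follows from the observation that $T \mapsto T^\top$ is a bijection between $\Pi(\mu,\nu)$ and $\Pi(\nu,\mu)$ together with $M = M^\top$ (since $d_\mathcal{K}$ is symmetric); hence $W_p(\mu,\nu) = W_p(\nu,\mu)$.

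For identity of indiscernibles, $W_p(\mu,\mu) = 0$ is witnessed by the plan $T = \diag{\mu}$, which lies in $\Pi(\mu,\mu)$ and satisfies $\langle T, M\rangle = \sum_\kappa \mu_\kappa M_{\kappa,\kappa} = 0$ because $d_\mathcal{K}(\kappa,\kappa) = 0$. Conversely, suppose $W_p(\mu,\nu) = 0$. The feasible set $\Pi(\mu,\nu)$ is a compact polytope and the objective $T \mapsto \langle T, M\rangle$ is linear, so the infimum is attained at some $T^*$; from $\langle T^*, M\rangle = 0$ and $T^* \geq 0$, $M \geq 0$ we get $T^*_{\kappa,\kappa'} > 0 \implies M_{\kappa,\kappa'} = 0 \implies d_\mathcal{K}(\kappa,\kappa') = 0 \implies \kappa = \kappa'$, using that $d_\mathcal{K}$ is a genuine metric. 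Thus $T^*$ is diagonal, and $\mu = T^* \mathbf{1} = (T^*)^\top \mathbf{1} = \nu$.

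The substantive step is the triangle inequality. Given $\mu,\nu,\rho$, let $P \in \Pi(\mu,\nu)$ and $Q \in \Pi(\nu,\rho)$ be optimal plans for $W_p(\mu,\nu)$ and $W_p(\nu,\rho)$. Define the glued plan
\[
R_{\kappa,\kappa''} = \sum_{\kappa' :\, \nu_{\kappa'} > 0} \frac{P_{\kappa,\kappa'} Q_{\kappa',\kappa''}}{\nu_{\kappa'}},
\]
noting that whenever $\nu_{\kappa'} = 0$ the entire column $P_{\cdot,\kappa'}$ and row $Q_{\kappa',\cdot}$ vanish (they are non-negative with zero sum), so dropping those indices is harmless. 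A direct marginal computation gives $R\mathbf{1} = \mu$ and $R^\top\mathbf{1} = \rho$, so $R \in \Pi(\mu,\rho)$ and hence $W_p^p(\mu,\rho) \leq \langle R, M\rangle$. Expanding $\langle R, M \rangle$, applying the pointwise triangle inequality $d_\mathcal{K}(\kappa,\kappa'') \leq d_\mathcal{K}(\kappa,\kappa') + d_\mathcal{K}(\kappa',\kappa'')$ inside the $p$-th power, and then applying Minkowski's inequality in $\ell^p$ with respect to the non-negative weights $w_{\kappa,\kappa',\kappa''} = P_{\kappa,\kappa'} Q_{\kappa',\kappa''}/\nu_{\kappa'}$, the sum splits as
\[
W_p(\mu,\rho) \leq \Big( \textstyle\sum w_{\kappa,\kappa',\kappa''}\, d_\mathcal{K}(\kappa,\kappa')^p \Big)^{1/p} + \Big( \textstyle\sum w_{\kappa,\kappa',\kappa''}\, d_\mathcal{K}(\kappa',\kappa'')^p \Big)^{1/p} = W_p(\mu,\nu) + W_p(\nu,\rho),
\]
where the last equality marginalizes out the unused index in each term. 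For $p = \infty$ one replaces Minkowski by subadditivity of $\max$ over the support of $R$. Combined with Lemma \ref{lem:w-metric}'s companion facts above, this establishes that $W_p$ is a finite-valued metric on $\Delta^{\mathcal{K}}$.

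The main obstacle is the triangle-inequality step: getting the gluing construction right when $\nu$ has zero coordinates, verifying the marginals of $R$ cleanly, and applying Minkowski's inequality in the correct weighted-$\ell^p$ space so that the cross terms collapse exactly to $W_p(\mu,\nu)$ and $W_p(\nu,\rho)$. The other delicate point, needed for "$W_p(\mu,\nu)=0 \Rightarrow \mu=\nu$," is merely noting that the infimum over the compact transport polytope is attained, which is routine here.
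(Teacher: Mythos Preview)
Your proof is correct and follows the standard route (non-negativity, symmetry, and identity of indiscernibles are all straightforward; the triangle inequality via the discrete gluing $R_{\kappa,\kappa''} = \sum_{\kappa'} P_{\kappa,\kappa'} Q_{\kappa',\kappa''}/\nu_{\kappa'}$ followed by Minkowski is exactly the textbook argument). However, there is nothing to compare against: the paper does not prove this lemma at all. It is stated as a citation --- ``Proposition 2 of \cite{givens1984}'' --- and used as a black box in the proof of Proposition~\ref{prop:iota-cont}, where only the triangle inequality is invoked. So your write-up supplies a self-contained argument where the paper simply defers to the literature.
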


\begin{proposition}
  The map $\iota: \RR^K\times\RR^K\rightarrow \RR$ defined by $\iota(y,y') = W_1^1 (\mathfrak{s}(y),
  \mathfrak{s}(y'))$ satisfies
  \begin{equation}
    |\iota(y,y')-\iota(\bar{y},\bar{y}')| \leq 4C_M\|(y,y') - (\bar{y},\bar{y}')\|_2
    \label{eq:iota-cont}
  \end{equation}
  for any $(y,y'), (\bar{y},\bar{y}')\in\RR^K\times\RR^K$. And $\iota(0,0)=0$.
  \label{prop:iota-cont}
\end{proposition}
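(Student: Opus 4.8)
The plan is to establish the Lipschitz bound in two stages: first control $W_1^1$ on the simplex by the $\ell_1$ distance between the distributions (this is exactly Corollary~\ref{cor:w-lipschitz} with $p=1$), and then control the effect of the softmax map $\mathfrak{s}$. For the first stage, I would use the triangle inequality for $W_1^1$, which is legitimate because $W_1$ is a genuine metric on probability distributions (Lemma~\ref{lem:w-metric}), together with the fact that on the simplex $W_1^1 = W_1 \cdot 1$ so it still satisfies the triangle inequality in the form $|W_1^1(\mu,\nu) - W_1^1(\bar\mu,\bar\nu)| \le W_1^1(\mu,\bar\mu) + W_1^1(\nu,\bar\nu)$. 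Then Corollary~\ref{cor:w-lipschitz} bounds each term on the right by $C_M \sum_\kappa |\mu(\kappa) - \bar\mu(\kappa)| = C_M \|\mu - \bar\mu\|_1$. Applying this with $\mu = \mathfrak{s}(y)$, $\bar\mu = \mathfrak{s}(\bar y)$, $\nu = \mathfrak{s}(y')$, $\bar\nu = \mathfrak{s}(\bar y')$ gives
\[
|\iota(y,y') - \iota(\bar y,\bar y')| \le C_M\left(\|\mathfrak{s}(y) - \mathfrak{s}(\bar y)\|_1 + \|\mathfrak{s}(y') - \mathfrak{s}(\bar y')\|_1\right).
\]

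For the second stage I need a Lipschitz estimate for the softmax. The standard fact is that $\mathfrak{s}: \RR^K \to \Delta^{\mathcal{K}}$ has Jacobian $\diag(\mathfrak{s}(o)) - \mathfrak{s}(o)\mathfrak{s}(o)^\top$, whose operator norm is bounded by $1$; more precisely one can show $\|\mathfrak{s}(o) - \mathfrak{s}(\bar o)\|_1 \le 2\|o - \bar o\|_\infty \le 2\|o - \bar o\|_2$, or alternatively $\|\mathfrak{s}(o) - \mathfrak{s}(\bar o)\|_1 \le \|o - \bar o\|_2$ depending on which constant one wants to carry. Using $\|\mathfrak{s}(o) - \mathfrak{s}(\bar o)\|_1 \le 2\|o - \bar o\|_2$, the bound above becomes $|\iota(y,y') - \iota(\bar y,\bar y')| \le 2C_M(\|y - \bar y\|_2 + \|y' - \bar y'\|_2)$, and finally $\|y - \bar y\|_2 + \|y' - \bar y'\|_2 \le 2\|(y,y') - (\bar y, \bar y')\|_2$ by the elementary inequality $a + b \le \sqrt{2}\sqrt{a^2+b^2} \le 2\sqrt{a^2+b^2}$, yielding the claimed constant $4C_M$. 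The statement $\iota(0,0) = 0$ is immediate: $\mathfrak{s}(0)$ is the uniform distribution, so $\mathfrak{s}(0) = \mathfrak{s}(0)$ and $W_1^1$ of a distribution with itself is $0$ since $W_1$ is a metric.

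The main obstacle, such as it is, is pinning down the softmax Lipschitz constant cleanly and making sure the constants chain together to give exactly $4C_M$ rather than something larger; this is a routine but slightly fiddly computation with $\ell_1$–$\ell_2$–$\ell_\infty$ norm conversions and the explicit softmax Jacobian bound. Everything else is a direct invocation of results already in the excerpt (Corollary~\ref{cor:w-lipschitz} and Lemma~\ref{lem:w-metric}). I would present the softmax bound as a short self-contained lemma (Jacobian computation plus mean value inequality along the segment from $o$ to $\bar o$) so the constant is transparent, then assemble the three inequalities in the order above.
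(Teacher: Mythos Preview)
Your proposal is correct and follows essentially the same route as the paper: triangle inequality for $W_1$ (via Lemma~\ref{lem:w-metric}), then Corollary~\ref{cor:w-lipschitz} to pass to $\|\mathfrak{s}(\cdot)-\mathfrak{s}(\cdot)\|_1$, then a softmax Lipschitz bound $\|\mathfrak{s}(y)-\mathfrak{s}(\bar y)\|_1\le 2\|y-\bar y\|_2$ obtained from the explicit Jacobian and the mean value theorem, and finally the $\ell_1$--$\ell_2$ inequality on the two blocks. The paper actually gets the slightly sharper constant $2\sqrt{2}\,C_M$ before relaxing to $4C_M$, but your chaining with $a+b\le 2\sqrt{a^2+b^2}$ is equivalent for the stated result.
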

\begin{proof}
  For any $(y,y'), (\bar{y},\bar{y}')\in\RR^K\times\RR^K$, by Lemma~\ref{lem:w-metric}, we can use
  triangle inequality on the Wasserstein loss,
  \[
  |\iota(y,y') - \iota(\bar{y},\bar{y}')|
  = |\iota(y,y') - \iota(\bar{y},y') + \iota(\bar{y},y') - \iota(\bar{y},\bar{y}')|
  \leq \iota(y,\bar{y}) + \iota(y',\bar{y}')
  \]
  Following Corollary~\ref{cor:w-lipschitz}, it continues as
  \begin{equation}
  |\iota(y,y') - \iota(\bar{y},\bar{y}')| \leq
  C_M\left(\|\mathfrak{s}(y)-\mathfrak{s}(\bar{y})\|_1
  + \|\mathfrak{s}(y')-\mathfrak{s}(\bar{y}')\|_1
  \right)
   \label{eq:iota-cont-1}
  \end{equation}
  Note for each $k=1,\ldots,K$, the gradient $\nabla_y\mathfrak{s}_k$ satisfies
  \begin{equation}
   \|\nabla_y\mathfrak{s}_k\|_2 = \left\| \left(\frac{\partial\mathfrak{s}_k}{\partial y_j}\right)_
  {j=1}^K \right\|_2
  = \left\|\left(\delta_{kj}\mathfrak{s}_k-\mathfrak{s}_k\mathfrak{s}_j\right)_ {j=1}^K\right\|_2
  = \sqrt{\mathfrak{s}_k^2\sum_{j=1}^K\mathfrak{s}_j^2 + \mathfrak{s}_k^2(1-2\mathfrak{s}_k)}
   \label{eq:iota-cont-0}
  \end{equation}
  By mean value theorem, $\exists \alpha\in[0,1]$, such that for $y_\theta=\alpha y +
  (1-\alpha)\bar{y}$, it holds that
  \[
  \|\mathfrak{s}(y)-\mathfrak{s}(\bar{y})\|_1
  = \sum_{k=1}^K \left|\langle \nabla_y\mathfrak{s}_k|_{y=y_{\alpha_k}}, y-\bar{y}\rangle\right|
  \leq \sum_{k=1}^K \|\nabla_y\mathfrak{s}_k|_{y=y_{\alpha_k}}\|_2\|y-\bar{y}\|_2
  \leq 2 \|y-\bar{y}\|_2
  \]
  because by \eqref{eq:iota-cont-0}, and the fact that $\sqrt{\sum_j\mathfrak{s}_j^2}\leq
  \sum_j\mathfrak{s}_j=1$ and $\sqrt{a+b}\leq\sqrt{a}+\sqrt{b}$ for $a,b\geq 0$, it holds
  \[
  \begin{aligned}
    \sum_{k=1}^K \|\nabla_y\mathfrak{s}_k\|_2
    &= \sum_{k:\mathfrak{s}_k\leq1/2} \|\nabla_y\mathfrak{s}_k\|_2
    + \sum_{k:\mathfrak{s}_k > 1/2} \|\nabla_y\mathfrak{s}_k\|_2 \\
    &\leq \sum_{k:\mathfrak{s}_k\leq1/2} \left( \mathfrak{s}_k + \mathfrak{s}_k\sqrt{1-2\mathfrak
    {s}_k} \right) + \sum_{k:\mathfrak{s}_k>1/2} \mathfrak{s}_k
    \leq \sum_{k=1}^K 2\mathfrak{s}_k = 2
  \end{aligned}
  \]
  Similarly, we have $\|\mathfrak{s}(y')-\mathfrak{s}(\bar{y}')\|_1 \leq 2 \|y'-\bar{y}'\|_2$, so
  from \eqref{eq:iota-cont-1}, we know
\[
  |\iota(y,y') - \iota(\bar{y},\bar{y}')| \leq 2C_M(\|y-\bar{y}\|_2 + \|y'-\bar{y}'\|_2)
  \leq 2\sqrt{2}C_M\left(\|y-\bar{y}\|_2^2 + \|y'-\bar{y}'\|_2^2\right)^{1/2}
\]
then \eqref{eq:iota-cont} follows immediately. The second conclusion follows trivially as
$\mathfrak{s}$ maps the zero vector to a uniform distribution.
\end{proof}

\begin{proof}[Proof of Theorem~\ref{thm:main-theorem}]
Consider the loss function space preceded with a softmax layer
\[
  \mathcal{L} = \{\iota_\theta: (x,y) \mapsto W_1^1(\mathfrak{s}(h_\theta^o(x)), \mathfrak{s} (y)):
  h_\theta^o\in\mathcal{H}^o \}
\]
We apply Lemma~\ref{lem:talagrand} to the $4C_M$-Lipschitz continuous function $\iota$ in
Proposition~\ref{prop:iota-cont} and the function space
\[
\underbrace{\mathcal{H}^o\times\ldots\times\mathcal {H}^o}_{\text{$K$ copies}}
\times \underbrace{\mathcal{I}\times\ldots\times\mathcal{I}}_{\text{$K$ copies}}
\]
with $\mathcal{I}$ a singleton function space with only the identity map. It holds
\begin{equation}
  \hat{\mathfrak{R}}_S(\mathcal{L}) \leq 8C_M \left(K\hat{\mathfrak{R}}_S(\mathcal{H}^o) + K\hat
  {\mathfrak{R}}_S(\mathcal{I})\right) = 8KC_M\hat{\mathfrak{R}}_S(\mathcal{H}^o)
  \label{eq:L-complexity-bound}
\end{equation}
because for the identity map, and a sample $S=(y_1,\ldots,y_N)$, we can calculate
\[
  \hat{\mathfrak{R}}_S(\mathcal{I}) =
  \EE_{\sigma}\left[\sup_{f\in\mathcal{I}}\frac{1} {N}\sum_{i=1}^N\sigma_if(y_i)\right]
  = \EE_{\sigma}\left[\frac{1} {N}\sum_{i=1}^N\sigma_iy_i\right] = 0
\]
The conclusion of the theorem follows by combining \eqref{eq:L-complexity-bound} with Theorem~\ref
{thm:emd-est-bound-0} and Lemma~\ref {lem:oracle-ineq-0}.
\end{proof}

\section{Connection with multiclass classification}
\label{sec:connection-mc-classification}

\begin{proof}[Proof of Proposition~\ref{prop:multiclass-bound}]
  Given that the label is a ``one-hot'' vector $y=\mathbbm{e}_\kappa$, the set of transport plans
  \eqref{eq:transport-polytope} degenerates. Specifically, the constraint $T^\top\mathbf
  {1}=\mathbbm{e}_\kappa$ means that only the $\kappa$-th column of $T$ can be non-zero.
  Furthermore, the constraint $T\mathbf{1}=h_{\hat{\theta}}(\cdot|x)$ ensures that the $\kappa$-th
  column of $T$ actually equals $h_{\hat{\theta}}(\cdot|x)$. In other words, the set $\Pi(h_
  {\hat{\theta}(\cdot|x)}, \mathbbm{e}_\kappa)$ contains only one feasible transport plan, so \eqref
  {eq:wasserstein-loss} can be computed directly as
  \[
  W_p^p(h_{\hat{\theta}}(\cdot|x), \mathbbm{e}_\kappa) = \sum_{\kappa'\in\mathcal{K}} M_
  {\kappa',\kappa}h_{\hat{\theta}}(\kappa'|x) = \sum_{\kappa'\in\mathcal{K}}
  d_\mathcal{K}^p(\kappa',\kappa) h_{\hat{\theta}}(\kappa'|x)
  \]
  Now let $\hat{\kappa}=\argmax_\kappa h_{\hat{\theta}}(\kappa|x)$ be the prediction, we have
  \[
  h_{\hat{\theta}}(\hat{\kappa}|x) = 1-\sum_{\kappa\neq\hat{\kappa}}h_{\hat{\theta}}(\kappa|x)
  \geq 1-\sum_ {\kappa\neq\hat{\kappa}} h_{\hat{\theta}}(\hat{\kappa}|x)
  = 1-(K-1)h_{\hat{\theta}}(\hat{\kappa}|x)
  \]
  Therefore, $h_{\hat{\theta}}(\hat{\kappa}|x)\geq 1/K$, so
  \[
  W_p^p(h_{\hat{\theta}}(\cdot|x),\mathbbm{e}_\kappa)
  \geq d_\mathcal{K}^p(\hat{\kappa},\kappa)h_{\hat{\theta}}(\hat{\kappa}|x)
  \geq d_\mathcal{K}^p(\hat{\kappa},\kappa) / K
  \]
  The conclusion follows by applying Theorem~\ref{thm:main-theorem} with $p=1$.
\end{proof}

\section{Algorithmic Details of Learning with a Wasserstein Loss}
\label{sec:algorithmic-details}

In Section~\ref{sec:generalization-error}, we describe the statistical generalization properties of learning with a Wasserstein loss function via empirical risk minimization on a general space of classifiers $\mathcal{H}$. In all the empirical studies presented in the paper, we use the space of linear logistic regression classifiers, defined by
\[
\mathcal{H}=\left\{ h_\theta(x) = \left( \frac{\exp(\theta_k^\top x)}{\sum_{j=1}^K\exp(\theta_j^\top x)} \right)_{k=1}^K: \theta_k \in\mathbb{R}^{D}, k=1,...,K \right\}
\]
We use stochastic gradient descent with a mini-batch size of 100 samples to optimize the empirical risk, with a standard regularizer $0.0005\sum_{k=1}^K\|\theta_k\|_2^2$ on the weights. The algorithm is described in Algorithm~\ref{algo:sgd}, where \textsc{Wasserstein} is a sub-routine that computes the Wasserstein loss and its subgradient via the dual solution as described in Algorithm~\ref{algo:subgradient}. We always run the gradient descent for a fixed number of 100,000 iterations for training.

\begin{algorithm}
\caption{SGD Learning of Linear Logistic Model with Wasserstein Loss}
\begin{algorithmic}
\State Init $\theta^1$ randomly.
\For{$t=1,\ldots,T$}
  \State Sample mini-batch $\mathcal{D}^t=(x_1,y_1),\ldots,(x_n,y_n)$ from the training set.
  \State Compute Wasserstein subgradient $\partial W_p^p/\partial h_{\theta}|_{\theta^t}  \gets \text{\textsc{Wasserstein}}(\mathcal{D}^t, h_{\theta^t}(\cdot))$.
  \State Compute parameter subgradient $\partial W_p^p/\partial \theta |_{\theta^t} = (\partial h_\theta/\partial \theta)(\partial W^p_p/\partial h_\theta)|_{\theta^t}$
  \State Update parameter $\theta^{t+1}\gets \theta^t - \eta_t \partial W_p^p/\partial \theta |_{\theta^t}$
\EndFor
\end{algorithmic}
\label{algo:sgd}
\end{algorithm}

Note that the same training algorithm can easily be extended from training a linear logistic regression model to a multi-layer neural network model, by cascading the chain-rule in the subgradient computation.

\section{Empirical study}

\subsection{Noisy label example}
\label{sec:lattice-details}

We simulate the phenomenon of label noise arising from confusion of semantically similar classes as follows. Consider a multiclass classification problem, in which the labels correspond to the vertices on a $D\times D$ lattice on the 2D plane. The Euclidean distance in $\RR^2$ is used to measure the semantic similarity between labels. The observations for each category are samples from an isotropic Gaussian distribution centered at the corresponding vertex. Given a noise level $t$, we choose with probability $t$ to flip the label for each training sample to one of the neighboring categories\footnote{Connected vertices on the lattice are considered neighbors, and the Euclidean distance between neighbors is set to 1.}, chosen uniformly at random. Figure~\ref{fig:noisy-label-data} shows the training set for a $3\times 3$ lattice with noise levels $t=0.1$ and $t=0.5$, respectively.

Figure~\ref{fig:lattice} is generated as follows. We repeat 10 times for noise levels $t=0.1,0.2,\ldots,0.9$ and $D=3,4,\ldots,7$. We train a multiclass linear logistic regression classifier (as described in section \ref{sec:algorithmic-details} of the Appendix) using either the standard $\mathsf{KL}$-divergence loss\footnote{This corresponds to maximum likelihood estimation of the logistic regression model.} or the proposed Wasserstein loss\footnote{In this special case, this corresponds to weighted maximum likelihood estimation, c.f. Section~\ref{sec:connection-mc-classification}.}. The performance is measured by the mean Euclidean distance in the plane between the predicted class and the true class, on the test set. Figure~\ref{fig:lattice} compares the performance of the two loss functions.

\begin{figure}
\centering
\begin{subfigure}[b]{0.49\linewidth}
\includegraphics[width=\linewidth]{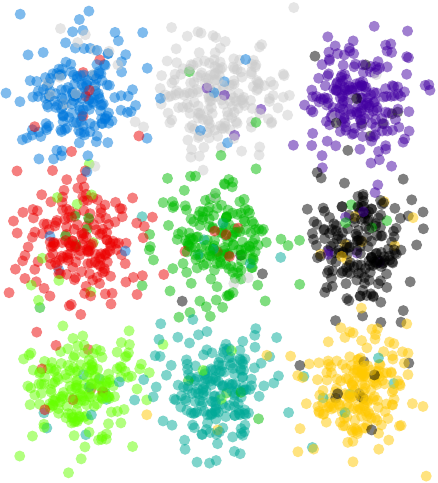}
\caption{Noise level 0.1}
\end{subfigure}\hfill
\begin{subfigure}[b]{0.49\linewidth}
\includegraphics[width=\linewidth]{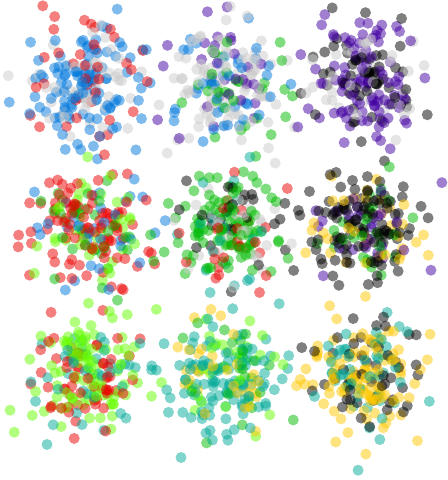}
\caption{Noise level 0.5}
\end{subfigure}
\caption{Illustration of training samples on a 3x3 lattice with different noise levels.}
\label{fig:noisy-label-data}
\end{figure}

\subsection{Full figure for the MNIST example}

The full version of Figure~\ref{fig:mnist} from Section~\ref{sec:mnist} is shown in Figure~\ref{fig:mnist-full}.

\begin{figure}
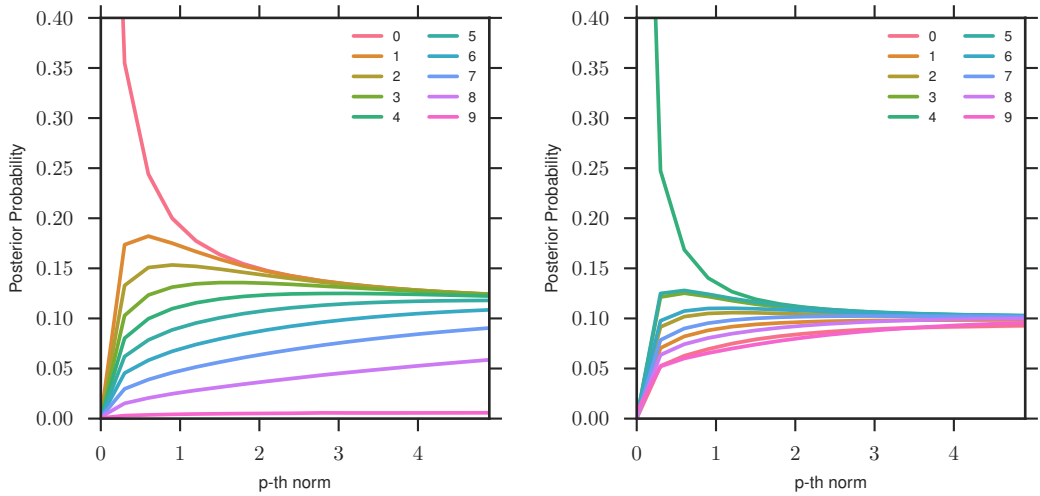

\centering
\begin{subfigure}[b]{0.49\textwidth}
\resizebox{\linewidth}{!}{
\input{full-0.pgf}
}
\caption{Posterior prediction for images of digit 0.}
\end{subfigure}\hfill
\begin{subfigure}[b]{0.49\textwidth}
\resizebox{\linewidth}{!}{
\input{full-4.pgf}
}
\caption{Posterior prediction for images of digit 4.}
\end{subfigure}
\caption{Each curve is the predicted probability for a target digit from models trained with different $p$ values for the ground metric.}
\label{fig:mnist-full}
\end{figure}

\subsection{Details of the Flickr tag prediction experiment}

From the tags in the Yahoo Flickr Creative Commons dataset, we filtered out those not occurring in the WordNet\footnote{{\tt http://wordnet.princeton.edu}} database, as well those whose dominant lexical category was "noun.location" or "noun.time." We also filtered out by hand nouns referring to geographical location or nationality, proper nouns, numbers, photography-specific vocabulary, and several words not generally descriptive of visual content (such as "annual" and "demo"). From the remainder, the 1000 most frequently occurring tags were used.

We list some of the 1000 selected tags here. The 50 most frequently occurring tags: {\it travel, square, wedding, art, flower, music, nature, party, beach, family, people, food, tree, summer, water, concert, winter, sky, snow, street, portrait, architecture, car, live, trip, friend, cat, sign, garden, mountain, bird, sport, light, museum, animal, rock, show, spring, dog, film, blue, green, road, girl, event, red, fun, building, new, cloud.} \ldots and the 50 least frequent tags: {\it arboretum, chick, sightseeing, vineyard, animalia, burlesque, key, flat, whale, swiss, giraffe, floor, peak, contemporary, scooter, society, actor, tomb, fabric, gala, coral, sleeping, lizard, performer, album, body, crew, bathroom, bed, cricket, piano, base, poetry, master, renovation, step, ghost, freight, champion, cartoon, jumping, crochet, gaming, shooting, animation, carving, rocket, infant, drift, hope.}

The complete features and labels can also be downloaded from the project website\footnote{\url{http://cbcl.mit.edu/wasserstein/}}. We train a multiclass linear logistic regression model with a linear combination of the Wasserstein loss and the \textsf{KL} divergence-based loss. The Wasserstein loss between the prediction and the normalized groundtruth is computed as described in Algorithm \ref{algo:subgradient}, using 10 iterations of the Sinkhorn-Knopp algorithm. Based on inspection of the ground metric matrix, we use $p$-norm with $p=13$, and set $\lambda=50$. This ensures that the matrix $\mathbf{K}$ is reasonably sparse, enforcing semantic smoothness only in each local neighborhood. Stochastic gradient descent with a mini-batch size of 100, and momentum 0.7 is run for 100,000 iterations to optimize the objective function on the training set. The baseline is trained under the same setting, using only the \textsf{KL} loss function.

To create the dataset with reduced redundancy, for each image in the training set, we compute the pairwise semantic distance for the groundtruth tags, and cluster them into ``equivalent'' tag-sets with a threshold of semantic distance 1.3. Within each tag-set, one random tag is selected.

Figure~\ref{fig:flickr-eg-more} shows more test images and predictions randomly picked from the test set.

\begin{figure}
\centering
\begin{subfigure}[t]{.32\textwidth}
  \includegraphics[width=\linewidth]{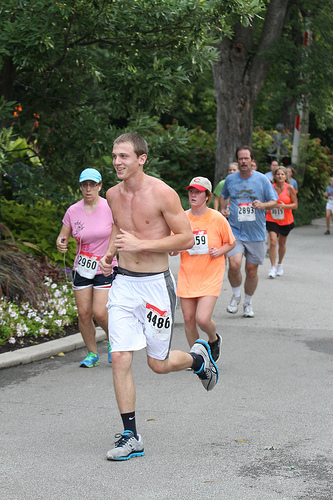}
  \caption{\textbf{Flickr user tags}: zoo, run, mark; \textbf{our proposals}: running, summer, fun;
  \textbf{baseline proposals}: running, country, lake.}
\end{subfigure}\hfill
\begin{subfigure}[t]{.32\textwidth}
  \includegraphics[width=\linewidth]{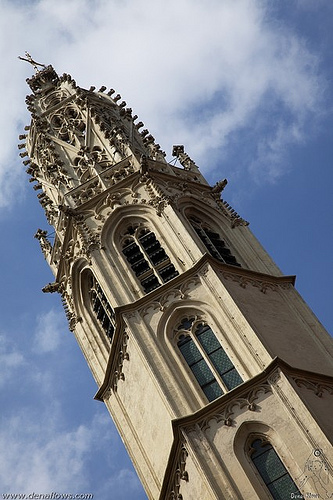}
  \caption{\textbf{Flickr user tags}: travel, architecture, tourism; \textbf{our proposals}: sky, roof, building;
  \textbf{baseline proposals}: art, sky, beach.}
\end{subfigure}\hfill
\begin{subfigure}[t]{.32\textwidth}
  \includegraphics[width=\linewidth]{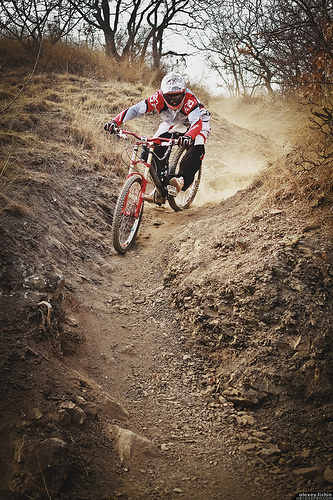}
  \caption{\textbf{Flickr user tags}: spring, race, training; \textbf{our proposals}: road, bike, trail; \textbf{baseline proposals}: dog, surf, bike.}
\end{subfigure}\\
\begin{subfigure}[b]{.49\textwidth}
  \includegraphics[width=\linewidth]{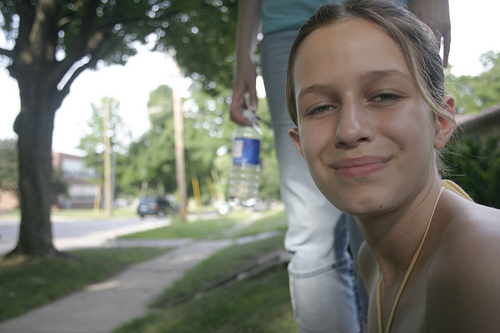}
  \caption{\textbf{Flickr user tags}: family, trip, house; \textbf{our proposals}: family, girl, green; \textbf{baseline proposals}: woman, tree, family.}
\end{subfigure}
\hfill
\begin{subfigure}[b]{.49\textwidth}
  \includegraphics[width=\linewidth]{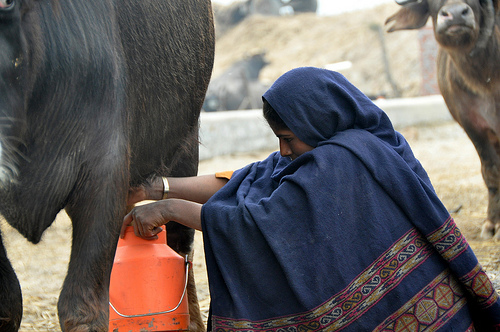}
  \caption{\textbf{Flickr user tags}: education, weather, cow, agriculture; \textbf{our proposals}: girl, people, animal, play; \textbf{baseline proposals}: concert, statue, pretty, girl.}
\end{subfigure}
\begin{subfigure}[b]{.49\textwidth}
  \includegraphics[width=\linewidth]{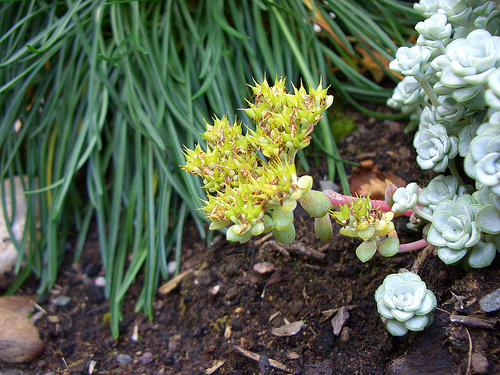}
  \caption{\textbf{Flickr user tags}: garden, table, gardening; \textbf{our proposals}: garden, spring, plant; \textbf{baseline proposals}: garden, decoration, plant.}
\end{subfigure}
\hfill
\begin{subfigure}[b]{.49\textwidth}
  \includegraphics[width=\linewidth]{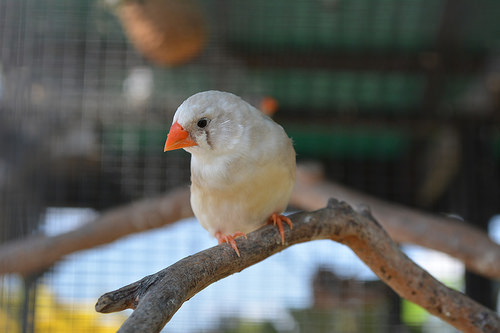}
  \caption{\textbf{Flickr user tags}: nature, bird, rescue; \textbf{our proposals}: bird, nature, wildlife; \textbf{baseline proposals}: ature, bird, baby.}
\end{subfigure}
\caption{Examples of images in the Flickr dataset. We show the groundtruth tags and as well as tags
proposed by our algorithm and baseline.}
\label{fig:flickr-eg-more}
\end{figure}

\end{document}